\documentclass{article}

\PassOptionsToPackage{round,sort}{natbib}

\usepackage[final]{neurips_2021}





\usepackage[utf8]{inputenc} 
\usepackage[T1]{fontenc}    
\usepackage{url}            

\usepackage{algorithm}
\usepackage{algorithmic}


\usepackage[export]{adjustbox}
\usepackage{subcaption}
\usepackage[percent]{overpic}
\usepackage{wrapfig}
\usepackage{graphicx}
\usepackage{svg}


\usepackage{pifont}
\usepackage{float}

\usepackage{amsmath,amsfonts,amsthm,amssymb, nccmath}
\usepackage{nicefrac}       
\allowdisplaybreaks 
\usepackage{bbm}
\usepackage{framed}
\usepackage{enumerate}
\usepackage{mathtools}
\usepackage{scalefnt}
\usepackage{changepage}

\usepackage{comment}
\usepackage{setspace}
\usepackage{verbatim}
\usepackage{xcolor}
\usepackage{scalefnt}
\usepackage{xr}

\usepackage[shortlabels]{enumitem}
\setlist{leftmargin=12pt, labelwidth=1pt, topsep=0pt, partopsep=0pt}

\usepackage{placeins}
\usepackage{color}
\usepackage{footmisc}
\usepackage{xspace}

\usepackage{thmtools}
\usepackage{thm-restate}

\usepackage{multirow}
\usepackage{booktabs}
\usepackage{array}
\usepackage{tabularx}

\usepackage{microtype}

\usepackage[colorlinks=true, citecolor=blue, linkcolor=black]{hyperref}


\usepackage{mwe,tikz}
\newcommand{\paragraphsmall}[1]{\textbf{{#1} $\;$}}

\newcommand{\changelocaltocdepth}[1]{%
  \addtocontents{toc}{\protect\setcounter{tocdepth}{#1}}%
  \setcounter{tocdepth}{#1}%
}


\makeatletter
\newcommand{\settitle}{\@maketitle}
\makeatother


\DeclarePairedDelimiterX{\dotp}[2]{\langle}{\rangle}{#1, #2}

\DeclareMathOperator*{\argmin}{\arg\!\min}

\makeatletter
\newcommand*\bigcdot{\mathpalette\bigcdot@{.5}}
\newcommand*\bigcdot@[2]{\mathbin{\vcenter{\hbox{\scalebox{#2}{$\m@th#1\bullet$}}}}}
\makeatother

\newcommand{\be}{\begin{equation}}
\newcommand{\ee}{\end{equation}}

\newcommand{\normsmall}[1]{\| #1\|}              
\newcommand{\norm}[1]{\left\lVert#1\right\rVert}

\newcommand{\abs}[1]        {| #1 |}

\newcommand{\eps}{\ensuremath{\varepsilon}}                       
\renewcommand{\epsilon}{\varepsilon}

\DeclarePairedDelimiter{\ceil}{\lceil}{\rceil}

\declaretheorem[name=Theorem]{theorem}


\newcommand{\REAL}{\ensuremath{\mathbb{R}}}


\newcommand{\xmark}{\ding{55}}%


\DeclareMathOperator*{\diag}{diag}

\renewcommand{\paragraph}{\paragraphsmall}

\title{Compressing Neural Networks: Towards Determining the Optimal Layer-wise Decomposition}

%
\newcommand{\setauthors}[1][$^{*}$]{
\author{%
  Lucas Liebenwein{#1} \\
  MIT CSAIL \\
  \texttt{lucas@csail.mit.edu} \\
  \And
  Alaa Maalouf$^{*}$ \\
  University of Haifa \\
  \texttt{alaamalouf12@gmail.com} \\
  \AND
  Oren Gal \\
  University of Haifa \\
  \texttt{orengal@alumni.technion.ac.il}
  \And
  Dan Feldman \\
  University of Haifa \\
  \texttt{dannyf.post@gmail.com} \\
  \And
  Daniela Rus \\
  MIT CSAIL \\
  \texttt{rus@csail.mit.edu} \\
}
}

\setauthors[\thanks{denotes authors with equal contributions. Code: \url{https://github.com/lucaslie/torchprune}}]

\setcounter{tocdepth}{2}

\begin{document}

\maketitle

\changelocaltocdepth{0}
\begin{abstract}
We present a novel global compression framework for deep neural networks that automatically analyzes each layer to identify the optimal per-layer compression ratio, while simultaneously achieving the desired overall compression. Our algorithm hinges on the idea of compressing each convolutional (or fully-connected) layer by slicing its channels into multiple groups and decomposing each group via low-rank decomposition. At the core of our algorithm is the derivation of layer-wise error bounds from the Eckart–Young–Mirsky theorem. We then leverage these bounds to frame the compression problem as an optimization problem where we wish to minimize the maximum compression error across layers and propose an efficient algorithm towards a solution. Our experiments indicate that our method outperforms existing low-rank compression approaches across a wide range of networks and data sets. We believe that our results open up new avenues for future research into the global performance-size trade-offs of modern neural networks.
\end{abstract}
\section{Introduction}\label{sec:intro}

\begin{wrapfigure}[17]{r}{0.44\textwidth}
\vspace{-3mm}
	\centering
	\begin{tikzpicture}
    \node at (-0.1,0) {\includegraphics[width=0.44\textwidth]{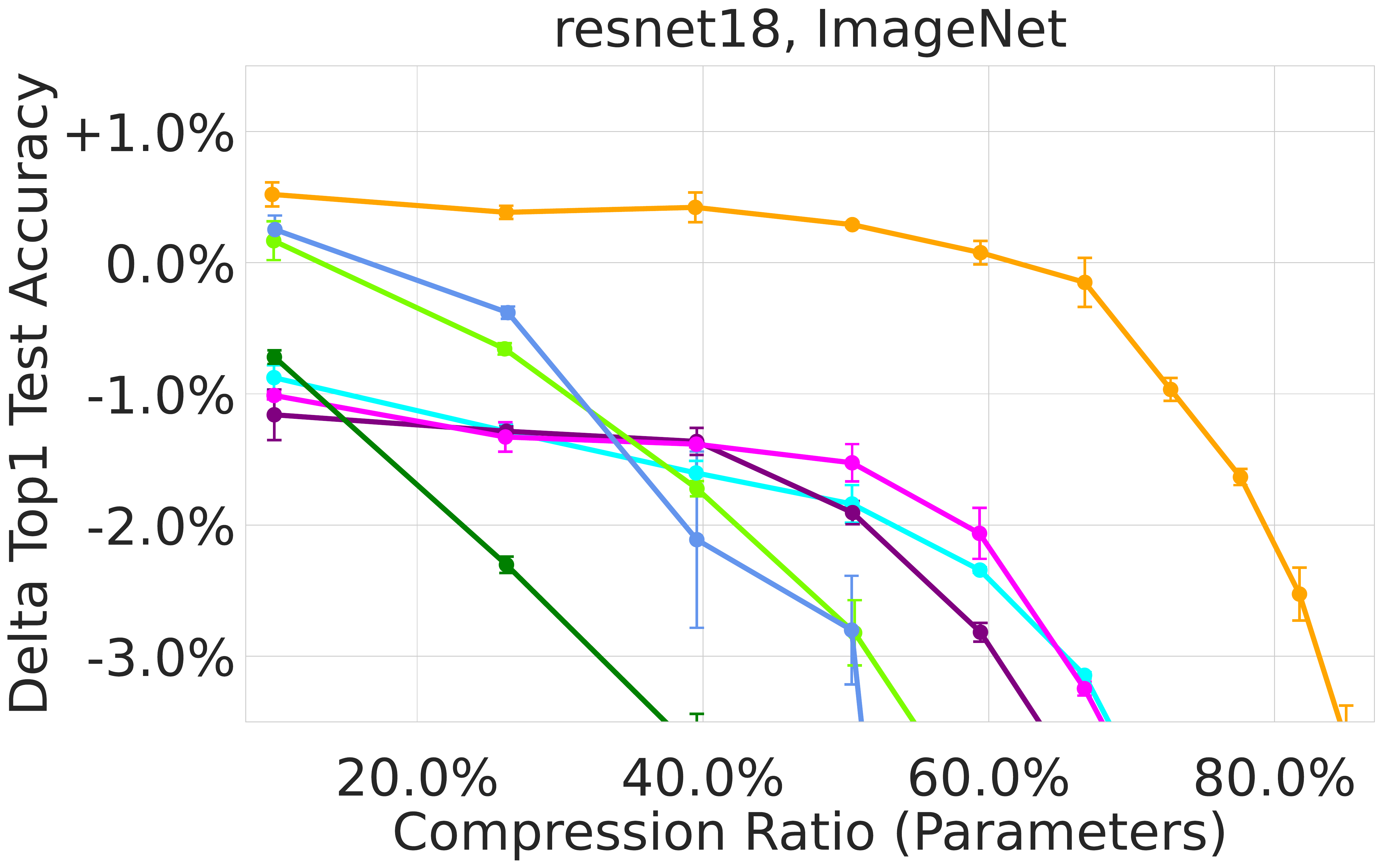}};
    \node at (2.1,0.7) {\includegraphics[width=0.11\textwidth]{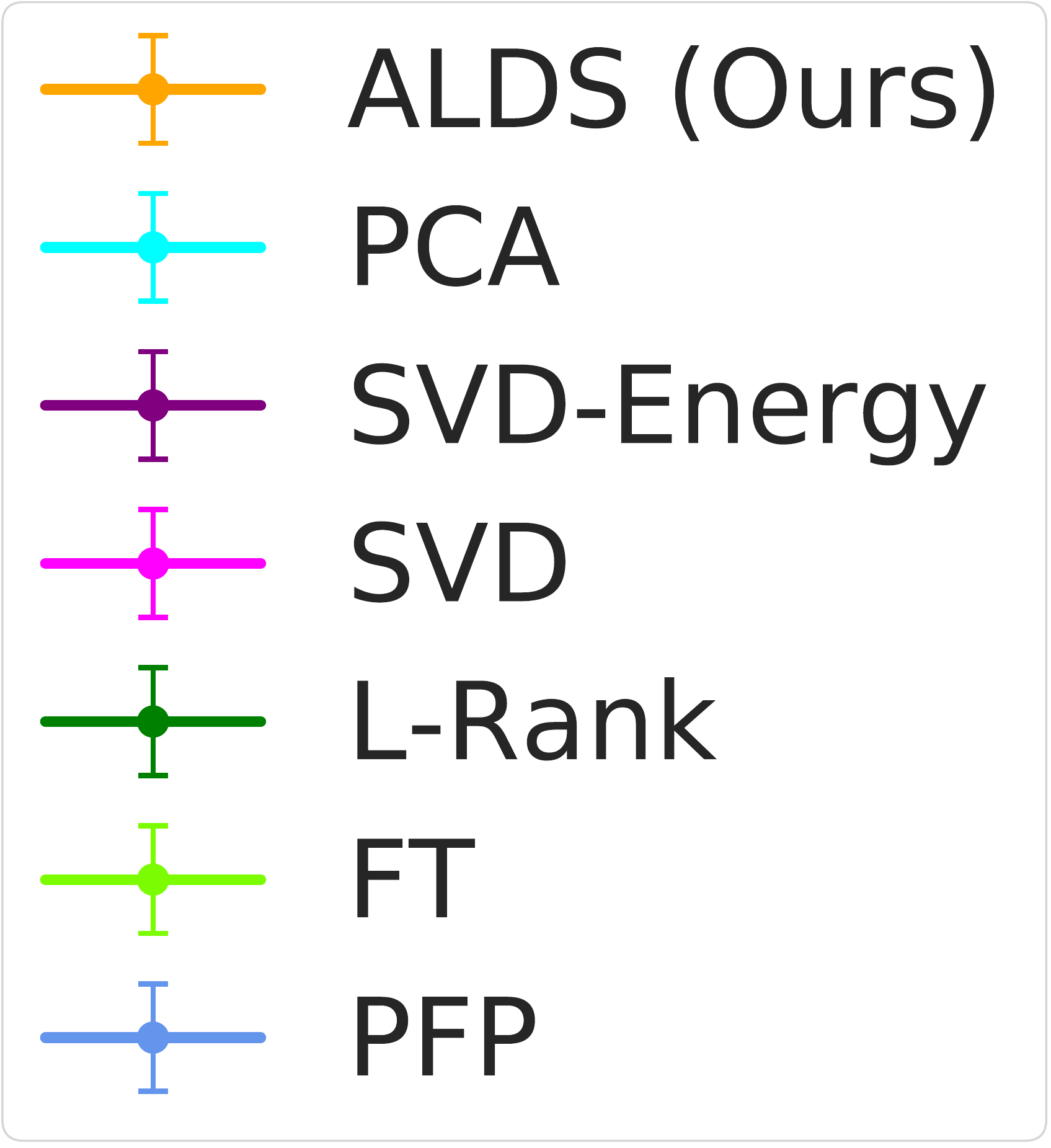}};
    \end{tikzpicture}
	\caption{\textbf{ALDS}, \emph{Automatic Layer-wise Decomposition Selector}, can compress up to 60\% of parameters on a ResNet18 (ImageNet), 3x more compared to baselines. Detailed results are described in Section~\ref{sec:experiments}.}
	\label{fig:intro_results}
\end{wrapfigure}

Neural network compression entails taking an existing model and reducing its computational and memory footprint in order to enable the deployment of large-scale networks in resource-constrained environments. 
Beyond inference time efficiency, compression can yield novel insights into the design~\citep{liu2018rethinking}, training~\citep{liebenwein2021lost, liebenwein2021sparse}, and theoretical properties~\citep{arora2018stronger} of neural networks.

Among existing compression techniques -- which include quantization~\citep{Wu2016}, distillation~\citep{hinton2015distilling}, and pruning~\citep{Han15} -- \emph{low-rank compression} aims at decomposing a layer's weight tensor into a tuple of smaller low-rank tensors. 
Such compression techniques may build upon the rich literature on low-rank decomposition and its numerous applications outside deep learning such as dimensionality reduction~\citep{laparra2015dimensionality} or spectral clustering~\citep{peng2015robust}. Moreover, low-rank compression can be readily implemented in any machine learning framework by replacing the existing layer with a set of smaller layers without the need for, e.g., sparse linear algebra support.

Within deep learning, we encounter two related, yet distinct challenges when applying low-rank compression. On the one hand, each layer should be efficiently decomposed (the ``local step'') and, on the other hand, we need to balance the amount of compression in each layer in order to achieve a desired overall compression ratio with minimal loss in the predictive power of the network (the ``global step''). While the ``local step``, i.e., designing the most efficient layer-wise decomposition method, has traditionally received lots of attention~\citep{Denton2014,kim2015compression,lebedev2014speeding, garipov2016ultimate, novikov2015tensorizing, jaderberg2014speeding}, the ``global step'' has only recently been the focus of attention in research, e.g., see the recent works of~\citet{idelbayev2020low, alvarez2017compression, Xu2020}.

In this paper, we set out to design a framework that simultaneously accounts for both the local and global step. Our proposed solution, termed \emph{Automatic Layer-wise Decomposition Selector} (ALDS), addresses this challenge by iteratively optimizing for each layer's decomposition method (local step) and the low-rank compression itself while accounting for the maximum error incurred across layers (global step). In Figure~\ref{fig:intro_results}, we show how ALDS outperforms existing approaches on the common ResNet18 (ImageNet) benchmark ($60\%$ compression compared to ${\sim}20\%$ for baselines).

\begin{figure}[t!]
\centering
\includegraphics[width=1\textwidth]{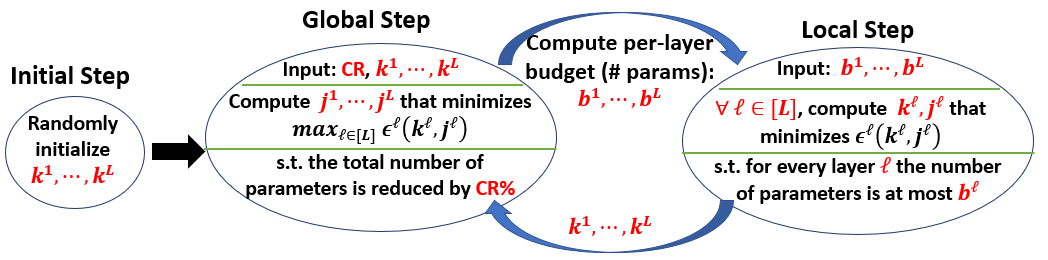}
\caption{\textbf{ALDS Overview}. The framework consists of a global and local step, see  Section~\ref{sec:method}.}
\label{fig:ourpip}
\vspace{-2ex}
\end{figure}

\paragraph{Efficient layer-wise decomposition.}
Our framework relies on a straightforward SVD-based decomposition of each layer. Inspired by~\citet{Denton2014, idelbayev2020low, jaderberg2014speeding} and others, we decompose each layer by first folding the weight tensor into a matrix before applying SVD and encoding the resulting pair of matrices as two separate layers.

\paragraph{Enhanced decomposition via multiple subsets.}
A natural generalization of low-rank decomposition methods entails splitting the matrix into multiple subsets (subspaces) before compressing each subset individually. In the context of deep learning, this was investigated before for individual layers~\citep{Denton2014}, including embedding layers~\citep{maalouf2021deep, Chen2018}. We take this idea further and incorporate it into our layer-wise decomposition method as additional hyperparameter in terms of the number of subsets. Thus, our local step, i.e., the layer-wise decomposition, constitutes of choosing the number of subsets ($k^\ell$) for each layer and the rank ($j^\ell$).

\paragraph{Towards a global solution for low-rank compression.}
We can describe the optimal solution for low-rank compression as the set of hyperparameters (number of subspaces $k^\ell$ and rank $j^\ell$ for each layer in our case) that minimizes the drop in accuracy of the compressed network. While finding the globally optimal solution is NP-complete, we propose ALDS as an efficiently solvable alternative that enables us to search for a locally optimal solution in terms of the maximum relative error incurred across layers. To this end, we derive spectral norm bounds based on the Eckhart-Young-Mirsky Theorem for our layer-wise decomposition method to describe the trade-off between the layer compression and the incurred error. Leveraging our bounds we can then efficiently optimize over the set of possible per-layer decompositions. An overview of ALDS is shown in Figure~\ref{fig:ourpip}.

\newcommand{\Wt}{\mathbf{\mathcal{W}}}
\newcommand{\Wthat}{\hat{\mathbf{\mathcal{W}}}}

\newcommand{\What}{\hat{{{W}}}}
\newcommand{\Xt}{\mathbf{\mathcal{X}}}
\newcommand{\Yt}{\mathbf{\mathcal{Y}}}
\newcommand{\Vt}{\mathbf{\mathcal{V}}}
\newcommand{\Ut}{\mathbf{\mathcal{U}}}

\newcommand{\pr}{\mathbf{CR}}

\newcommand{\W}{W}
\newcommand{\X}{X}
\newcommand{\Y}{Y}
\newcommand{\p}{p}

\section{Method}
\label{sec:method}
In this section, we introduce our compression framework consisting of a layer-wise decomposition method (Section~\ref{sec:method_layer}), a global selection mechanism to simultaneously compress all layers of a network (Section~\ref{sec:method_network}), and an optimization procedure (ALDS) to solve the selection problem (Section~\ref{sec:method_alds}).

\subsection{Local Layer Compression}
\label{sec:method_layer}
We detail our low-rank compression scheme for convolutional layers below and note that it readily applies to fully-connected layers as well as a special case of convolutions with a $1 \times 1$ kernel.

\paragraph{Compressing convolutions via SVD.} 
Given a convolutional layer of $f$ filters, $c$ channels, and a $\kappa_1 \times \kappa_2$ kernel we denote the corresponding weight tensor by $\Wt \in \REAL^{f\times c \times \kappa_1 \times \kappa_2}$. 
Following~\citet{Denton2014, idelbayev2020low, Wen2017} and others, we can then interpret the layer as a linear layer of shape $f \times c \kappa_1 \kappa_2$ and the corresponding rank $j$-approximation as two subsequent linear layers of shape $f \times j$ and $j \times c \kappa_1 \kappa_2$. Mapped back to convolutions, this corresponds to a $j \times c \times \kappa_1 \times \kappa_2$ convolution followed by a $f \times j \times 1 \times 1$ convolution.

\paragraph{Multiple subspaces.}
Following the intuition outlined in Section~\ref{sec:intro} we propose to cluster the columns of the layer's weight matrix into $k \geq 2$ separate subspaces before applying SVD to each subset.
To this end, we may consider any clustering method, such as k-means or projective clustering~\citep{maalouf2021deep,Chen2018}. However, such methods require expensive approximation algorithms which would limit our ability to incorporate them into an optimization-based compression framework as outlined in Section~\ref{sec:method_network}. In addition, arbitrary clustering may require re-shuffling the input tensors which could lead to significant slow-downs during inference.
We instead opted for a simple clustering method, namely \emph{channel slicing}, where we simply divide the $c$ input channels of the layer into $k$ subsets each containing at most $\ceil{c/k}$ consecutive input channels. 
Unlike other methods, channel slicing is efficiently implementable, e.g., as grouped convolutions in PyTorch~\citep{paszke2017automatic} and ensures practical speed-ups subsequent to compressing the network. 

\paragraph{Overview of per-layer decomposition.} 
In summary, for given integers $j,k \geq 1$ and a $4$D tensor $\Wt \in \REAL^{f\times c \times \kappa_1 \times \kappa_2}$ representing a convolution the per-layer compression method proceeds as follows:

\begin{enumerate}
\item 
\textsc{Partition}
the channels of the convolutional layer into $k$ subsets, where each subset has at most $\ceil{c/k}$ consecutive channels, resulting in $k$ convolutional tensors $\{\Wt_{i}\}_{i=1}^k$ where $\Wt_{i}\in \REAL^{f\times c_i \times \kappa_1\times \kappa_2}$, and $\sum_{i=1}^k c_i=c$.%
\label{step:1}

\item 
\textsc{Decompose}
each tensor $\Wt_{i}$, $i\in [k]$, by building the corresponding weight matrix $\W_i\in \REAL^{f\times c_i \kappa_1 \kappa_2}$, c.f. Figure~\ref{fig:ourarch}, computing its $j$-rank approximation, and factoring it into a pair of smaller matrices $U_{i}$ of $f$ rows and $j$ columns and $V_{i}$ of $j$ rows and $c_i \kappa_1 \kappa_2$ columns.%

\item 
\textsc{Replace}
the original layer in the network by $2$ layers. The first consists of $k$ parallel convolutions, where the $i^{\text{th}}$ parallel layer, $i\in[k]$, is described by the tensor $\Vt_i\in \REAL^{j \times c_i \times \kappa_1 \times \kappa_2}$ which can be constructed from the matrix $V_{i}$ ($j$ filters, $c_i$ channels, $\kappa_1\times \kappa_2$ kernel).
The second layer is constructed by reshaping each matrix $U_i$, $i\in [k]$, to obtain the tensor $\Ut_i \in \REAL^{f \times j \times 1 \times 1}$, and then channel stacking all $k$ tensors $\Ut_1, \cdots ,\Ut_k$ to get a single tensor of shape $f \times kj \times 1 \times 1$.
\end{enumerate}

The decomposed layer is depicted in Figure~\ref{fig:ourarch}.
The resulting layer pair
has $jc\kappa_1\kappa_2$ and $jfk$ parameters, respectively, which implies a parameter reduction from $fc\kappa_1\kappa_2$ to $j(fk+c\kappa_1\kappa_2)$.

\begin{figure}[t]
    \centering
    \includegraphics[width=1\textwidth,height=0.14\textheight]{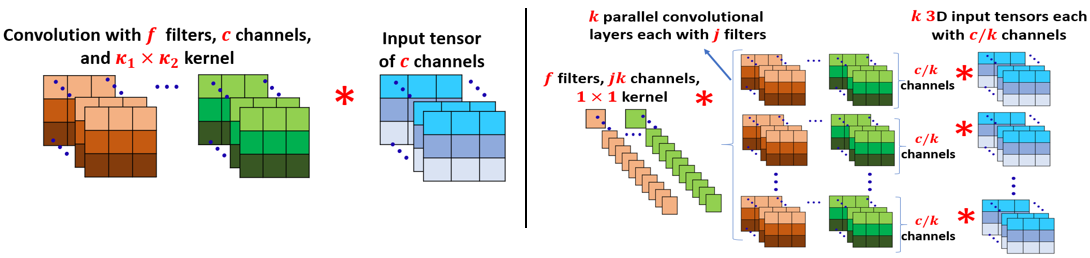}
    \caption{\small \textbf{Left: 2D convolution. right: decomposition used for ALDS.} For a $f \times c \times \kappa_1 \times \kappa_2$ convolution with $f$ filters, $c$ channels, and $\kappa_1 \times \kappa_2$ kernel, our per-layer decomposition consists: (1) $k$ parallel $j \times \nicefrac{c}{k} \times \kappa_1 \times \kappa_2$ convolutions; (2) a single $f \times kj \times 1 \times 1$ convolution applied on the first layer's (stacked) output.}
    \label{fig:ourarch}
\end{figure}

\subsection{Global Network Compression}
\label{sec:method_network}

In the previous section, we introduced our layer compression scheme. We note that in practice we usually want to compress an entire network consisting of $L$ layers up to a pre-specified relative reduction in parameters (``compression ratio'' or $\pr$). However, it is generally unclear how much each layer $\ell \in [L]$ should be compressed in order to achieve the desired $\pr$ while incurring a minimal increase in loss. Unfortunately, this optimization problem is NP-complete as we would have to check every combination of layer compression resulting in the desired $\pr$ in order to optimally compress each layer. On the other hand, simple heuristics, e.g., constant per-layer compression ratios, may lead to sub-optimal results, see Section~\ref{sec:experiments}.
To this end, we propose an efficiently solvable global compression framework based on minimizing the maximum relative error incurred across layers. We describe each component of our optimization procedure in greater detail below.

\paragraph{The layer-wise relative error as proxy for the overall loss.}
Since the true cost (the additional loss incurred after compression) would result in an NP-complete problem, we replace the true cost by a more efficient proxy.
Specifically, we consider the maximum relative error $\eps \coloneqq \max_{\ell \in [L]} \eps^\ell$ across layers, where $\eps^\ell$ denotes the theoretical maximum relative error in the $\ell^\text{th}$ layer as described in Theorem~\ref{thm:rel_error} below.
We choose to minimize this particular cost because: (i) minimizing the maximum relative error ensures that no layer incurs an unreasonably large error that might otherwise get propagated or amplified; (ii) relying on a relative instead of an absolute error notion is preferred as scaling between layers may arbitrarily change, e.g., due to batch normalization, and thus the absolute scale of layer errors may not be indicative of the increase in loss; and (iii) the per-layer relative error has been shown to be intrinsically linked to the theoretical compression error, e.g., see the works of~\citet{arora2018stronger} and~\citet{baykal2018datadependent} thus representing a natural proxy for the cost.

\paragraph{Definition of per-layer relative error.} \label{sec:Per-layerErr}
Let $\Wt^\ell \in \REAL^{f^\ell\times c^\ell \times \kappa^\ell_1 \times \kappa^\ell_2}$ and $\W^\ell\in \REAL^{f^\ell \times c^\ell \kappa_1^\ell \kappa_2^\ell}$ denote the weight tensor and corresponding folded matrix of layer $\ell$, respectively.
The per-layer relative error $\eps^\ell$ is hereby defined as the relative difference in the operator norm between the matrix $\What^\ell$ (that corresponds to the compressed weight tensor $\Wthat^\ell$) and the original weight matrix $\W^\ell$ in layer $\ell$, i.e,. 
\begin{equation}
\eps^\ell \coloneqq {\normsmall{\What^\ell - \W^\ell}}/{\normsmall{\W^\ell}}. \label{layer-error}
\end{equation}

Note that while in practice our method decomposes the original layer into a set of separate layers (see Section~\ref{sec:method_layer}), for the purpose of deriving the resulting error we re-compose the compressed layers into the overall matrix operator $\What^\ell$, i.e., $\What^\ell = [U_1^\ell V_1^\ell  \cdots  U_{k^\ell}^\ell V_{k^\ell}^\ell]$, where $U_i^\ell V_i^\ell$ is the factorization of the $i$th cluster (set of columns) in the $\ell$th layer, for every $\ell \in [L]$ and $i \in [k^\ell]$, see supplementary material for more details.
We note that the operator norm $\normsmall{\cdot}$ for a convolutional layer thus signifies the maximum relative error incurred for an individual output patch (``pixel'') across all output channels.

\paragraph{Derivation of relative error bounds.}
We now derive an error bound that enables us to describe the per-layer relative error in terms of the compression hyperparameters $j^\ell$ and $k^\ell$, i.e.,
$
\eps^\ell = \eps^\ell(k^\ell, j^\ell).
$ 
This will prove useful later on as we have to repeatedly query the relative error in our optimization procedure. The error bound is described in the following.

\begin{theorem}
\label{thm:rel_error}
Given a layer matrix $\W^\ell$ and the corresponding low-rank approximation $\What^\ell$, the relative error $\eps^\ell \coloneqq {\normsmall{\What^\ell - \W^\ell}}/{\normsmall{\W^\ell}}$ is bounded by 
\begin{align}
   \eps^\ell \leq \nicefrac{\sqrt{k}}{\alpha_1} \cdot  \max_{i\in [k]}{\alpha_{i,j+1}}, \label{rel-errclaim-main}
\end{align}
where $\alpha_{i,j+1}$ is the $j+1$ largest singular value of the matrix $\W^\ell_i$, for every $i \in [k]$, and $\alpha_1 = \normsmall{\W^\ell}$ is the largest singular value of $\W^\ell$.
\end{theorem}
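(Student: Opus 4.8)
The plan is to exploit the block-column structure that channel slicing imposes on $\W^\ell$, reducing the spectral-norm error of the full recomposed operator to the per-block errors, each of which is then controlled by Eckart--Young--Mirsky. First I would observe that partitioning the $c^\ell$ channels into $k$ groups (Step~\ref{step:1}) makes the folded matrix split column-wise as $\W^\ell = [\W^\ell_1 \cdots \W^\ell_k]$, and since the recomposed approximation is $\What^\ell = [U_1 V_1 \cdots U_k V_k]$ with $U_i V_i$ the rank-$j$ factorization of $\W^\ell_i$, the error matrix inherits the same block-column form $\What^\ell - \W^\ell = [E_1 \cdots E_k]$, where $E_i \coloneqq U_i V_i - \W^\ell_i$.

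Next I would invoke the Eckart--Young--Mirsky theorem on each block individually: since $U_i V_i$ is the best rank-$j$ approximation of $\W^\ell_i$ in the operator norm, we have $\normsmall{E_i} = \alpha_{i,j+1}$, the $(j{+}1)$-st largest singular value of $\W^\ell_i$. This converts the block errors into exactly the singular-value quantities that appear on the right-hand side of~\eqref{rel-errclaim-main}.

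The crux is then to bound the spectral norm of the concatenation $E = [E_1 \cdots E_k]$ in terms of the individual $\normsmall{E_i}$. I would use the identity $E E^{\top} = \sum_{i=1}^k E_i E_i^{\top}$, so that by subadditivity of the operator norm together with $\normsmall{E_i E_i^{\top}} = \normsmall{E_i}^2$,
\begin{align*}
\normsmall{E}^2 = \normsmall{E E^{\top}} = \norm{\sum_{i=1}^k E_i E_i^{\top}} \le \sum_{i=1}^k \normsmall{E_i}^2 = \sum_{i=1}^k \alpha_{i,j+1}^2 \le k \max_{i \in [k]} \alpha_{i,j+1}^2.
\end{align*}
Taking square roots and dividing through by $\normsmall{\W^\ell} = \alpha_1$ then yields the claimed bound~\eqref{rel-errclaim-main}.

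I expect this last concatenation inequality to be the main obstacle. Naive column stacking can in principle inflate the spectral norm, and the factor $\sqrt{k}$ is precisely the price paid for two successive relaxations: the triangle-inequality step collapsing $\normsmall{\sum_i E_i E_i^{\top}}$ to $\sum_i \normsmall{E_i}^2$, and then bounding that sum by $k \max_i \normsmall{E_i}^2$. Care is needed to justify these steps rather than reaching for an over-optimistic $\normsmall{E} = \max_i \normsmall{E_i}$, which would require orthogonality of the block column spaces that channel slicing does not guarantee.
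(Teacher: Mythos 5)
Your proof is correct, and it takes a genuinely different route from the paper's. The paper never forms a Gram matrix: it writes each block difference through a shared SVD, $\W^\ell_i - \What^\ell_i = \tilde U_i(\tilde\Sigma_i - \hat\Sigma_i)\tilde V_i$, factors the whole error as $[\tilde U_1 \cdots \tilde U_k]\,\diag\bigl((\tilde\Sigma_1-\hat\Sigma_1)\tilde V_1,\ldots,(\tilde\Sigma_k-\hat\Sigma_k)\tilde V_k\bigr)$, and then splits the bound by submultiplicativity into $\normsmall{[\tilde U_1\cdots\tilde U_k]}=\sqrt{k}$ (since $\sum_i \tilde U_i\tilde U_i^T = kI$) times the block-diagonal norm $\max_i \alpha_{i,j+1}$, with Eckart--Young--Mirsky entering only through $\normsmall{\tilde\Sigma_i-\hat\Sigma_i}=\alpha_{i,j+1}$. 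Your argument via $EE^T=\sum_i E_iE_i^T$ and subadditivity buys two things. First, it proves the strictly sharper intermediate bound $\normsmall{E}\le\bigl(\sum_i\alpha_{i,j+1}^2\bigr)^{1/2}$ before relaxing to $\sqrt{k}\max_i\alpha_{i,j+1}$, making visible that the $\sqrt{k}$ factor is saturated only when the per-block errors are balanced (which, incidentally, is exactly the regime the equal-error allocation in Algorithm~\ref{alg:budget_allocation} drives the layers toward), whereas the paper's factorization pays $\sqrt{k}$ up front regardless. Second, it uses only the value $\normsmall{E_i}=\alpha_{i,j+1}$ of each block error and nothing about the singular-vector structure of $\What^\ell_i$, so it extends verbatim to any per-block rank-$j$ approximation with a known spectral error -- consistent with the paper's remark that ALDS can wrap other decomposition schemes. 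What the paper's route buys in exchange is an explicit factorization showing where $\sqrt{k}$ originates (the concatenation of $k$ orthogonal bases). Your closing caution is also well placed: collapsing to $\normsmall{E}=\max_i\normsmall{E_i}$ would require the $E_iE_i^T$ to have mutually orthogonal ranges, which channel slicing does not provide; the paper only uses that identity for the genuinely block-diagonal factor, where it is valid. (One cosmetic note: the step the paper labels ``triangle inequality'' in~\eqref{eq:derev-main} is really submultiplicativity; the honest triangle inequality lives in your subadditivity step.)
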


\begin{proof}
First, we recall the matrices $\W^\ell_1,\cdots, \W^\ell_k$ and we denote the SVD factorization for each of them by: $\W^\ell_i =\tilde{U}^\ell_i \tilde{\Sigma}^\ell_i {{{\tilde{V}}_i}^{\ell}}$. 
Now, observe that for every $i\in [k]$, the matrix $\What^\ell_i$ is the $j$-rank approximation of $\W^\ell_i$. Hence, the SVD factorization of  $\What^\ell_i$ can be writen as $\What^\ell_i = \tilde{U}^\ell_i \hat{\Sigma}^\ell_i {{{\tilde{V}}_i}^{\ell^T}} $, where $\hat{\Sigma}^\ell_i \in \REAL^{f\times d}$ is a diagonal matrix such that its first $j$-diagonal entries are equal to the first $j$-entries on the diagonal of $\tilde{\Sigma}^\ell_i$, and the rest are zeros. Hence,  
\begin{equation}\label{eq:equality-main}
\begin{split}
\W^\ell - \What^\ell  
&= [\W^\ell_1 - \What^\ell_1 ,\cdots, \W^\ell_k -\What^\ell_k]
= [\tilde{U}^\ell_1(\tilde{\Sigma}^\ell_1 - \hat{\Sigma}^\ell_1) \tilde{V}^\ell_1,\cdots, 
\tilde{U}^\ell_k(\tilde{\Sigma}^\ell_k - \hat{\Sigma}^\ell_k) \tilde{V}^\ell_k]
\\&=[\tilde{U^\ell_1} \cdots \tilde{U}^\ell_k]
\diag\left((\tilde{\Sigma}^\ell_1 - \hat{\Sigma}^\ell_1) \tilde{V}^\ell_1, \ldots, (\tilde{\Sigma}^\ell_k - \hat{\Sigma}^\ell_k) \tilde{V}^\ell_k\right).
\end{split}
\end{equation}

By~\eqref{eq:equality-main} and by the triangle inequality, we have that

\begin{align}
 \norm{\W^\ell - \What^\ell} \leq  \norm{{\left[\tilde{U^\ell_1} \cdots \tilde{U}^\ell_k\right]}}  
 \norm{\diag\left((\tilde{\Sigma}^\ell_1 - \hat{\Sigma}^\ell_1) \tilde{V}^\ell_1, \ldots, (\tilde{\Sigma}^\ell_k - \hat{\Sigma}^\ell_k) \tilde{V}^\ell_k\right)}.\label{eq:derev-main}
\end{align}

Now, we observe that
\begin{align}
\norm{{\left[\tilde{U^\ell_1} \cdots \tilde{U}^\ell_k\right]}}^2
= \norm{{\left[\tilde{U^\ell_1} \cdots \tilde{U}^\ell_k\right]} 
{\left[\tilde{U^\ell_1} \cdots \tilde{U}^\ell_k\right]^T}
}
= \norm{
\diag (k, \ldots, k)
} = k. \label{boundingU-main}
\end{align}

Finally, we show that 
\begin{align}
\norm{\diag\left((\tilde{\Sigma}^\ell_1 - \hat{\Sigma}^\ell_1) \tilde{V}^\ell_1, \ldots, (\tilde{\Sigma}^\ell_k - \hat{\Sigma}^\ell_k) \tilde{V}^\ell_k\right)}
&=\max_{i\in [k]}  \norm{(\tilde{\Sigma}^\ell_i - \hat{\Sigma}^\ell_i) \tilde{V}^\ell_i}\\
&=\max_{i\in [k]}  \norm{(\tilde{\Sigma}^\ell_i - \hat{\Sigma}^\ell_i)} = \max_{i\in [k]}\alpha_{i,j+1}, \label{boundingV-main}
\end{align}
where the second equality holds since the columns of $V$ are orthogonal and the last equality holds according to the Eckhart-Young-Mirsky Theorem (Theorem 2.4.8 of~\citet{golub2013matrix}). Plugging~\eqref{boundingV-main} and~\eqref{boundingU-main} into~\eqref{eq:derev-main} concludes the proof.
\end{proof}

\paragraph{Resulting network size.}
Let $\theta = \{\Wt^\ell\}_{\ell=1}^L$ denote the set of weights for the $L$ layers and note that the number of parameters in layer $\ell$ is given by $\abs{\Wt^\ell} = f^\ell c^\ell \kappa^\ell_1 \kappa^\ell_2$ and $|{\theta}| = \sum_{\ell \in [L]} |\Wt^\ell|$. Moreover, note that $|{\Wthat^\ell}| = j^\ell (k^\ell f^\ell + c^\ell \kappa^\ell_1 \kappa^\ell_2)$ if decomposed, $\hat\theta = \{\Wthat^\ell\}_{\ell=1}^L$, and $\abs{\hat\theta} = \sum_{\ell \in [L]} \abs{\Wthat^\ell}$. The overall compression ratio is thus given by
$
1 - \nicefrac{\abs{\hat\theta}}{\abs{\theta}}
$
where we neglected other parameters for ease of exposition. Observe that the layer budget $\abs{\Wthat^\ell}$ is fully determined by $k^\ell, j^\ell$ just like the error bound.

\begin{algorithm}[b!]
\small
\caption{\textsc{ALDS}($\theta$, $\pr$, $n_{\text{seed}}$)}
\label{alg:budget_allocation}
\textbf{Input:} $\theta$: network parameters; $\pr$: overall compression ratio; $n_{\text{seed}}$: number of random seeds to initialize \\
\textbf{Output:} $k^1, \ldots, k^L$: number of subspaces for each layer; $j_1, \ldots, j^L$: desired rank per subspace for each layer
\begin{spacing}{1.1}
\begin{algorithmic}[1]
\small
\FOR{$i \in [n_\text{seed}]$} \label{lin:opt_start}
    \STATE $k^1, \ldots, k^L \gets \textsc{RandomInit()}$ \label{lin:randominit} \\
    \WHILE{not converged} \label{lin:conv_start}
        \STATE $j^1, \ldots, j^L \gets \textsc{OptimalRanks}(\pr, k^1, \ldots, k^L)$ \COMMENT{Global step: choose s.t.\ $\eps^1 = \ldots = \eps^L$} \label{lin:opt_ranks} \\
        \FOR{$\ell \in [L]$}
            \STATE $b^\ell \gets j^\ell (k^\ell f^\ell + c^\ell \kappa_1^\ell \kappa_2^\ell)$ \COMMENT{resulting layer budget} \label{lin:layer_budget} \\
            \STATE $k^\ell \gets \textsc{OptimalSubspaces}(b^\ell)$ \COMMENT{Local step: minimize error bound for a given layer budget} \label{lin:opt_k} \\
        \ENDFOR \label{lin:conv_end}
    \ENDWHILE
    \STATE $\eps_i = \textsc{RecordError}(k^1, \ldots, k^L, j^1, \ldots, j^L)$ \label{lin:res_error} \\
\ENDFOR \label{lin:opt_end}
\STATE \textbf{return} $k^1, \ldots, k^L, j^1, \ldots, j^L$ from $i_{\text{best}} = \argmin_i \eps_i$ \label{lin:best_allocation}
\end{algorithmic}
\end{spacing}
\end{algorithm}

\paragraph{Global Network Compression.}
Putting everything together we obtain the following formulation for the optimal per-layer budget: 
\begin{align}
    \eps_{opt} = 
    & \min_{\substack{\{j^\ell,k^\ell\}_{\ell=1}^L}}  && \max_{\ell \in [L]} \eps^\ell(k^\ell, j^\ell) \label{eq:budget_allocation} \\
    & \text{subject to} &&  1 - \nicefrac{\abs{\hat\theta (k^1, j^1, \ldots, k^L, j^L)}}{\abs{\theta}} \leq \pr, \nonumber
\end{align}
where $\pr$ denotes the desired overall compression ratio. Thus optimally allocating a per-layer budget entails finding the optimal number of subspaces $k^\ell$ and ranks $j^\ell$ for each layer constrained by the desired overall compression ratio $\pr$.

\subsection{Automatic Layer-wise Decomposition Selector (ALDS)}
\label{sec:method_alds}

We propose to solve~\eqref{eq:budget_allocation} by iteratively optimizing $k^1, \ldots, k^L$ and $j^1, \ldots, j^L$ until convergence akin of an EM-like algorithm as shown in Algorithm~\ref{alg:budget_allocation} and Figure~\ref{fig:ourpip}. 

Specifically, for a given set of weights $\theta$ and desired compression ratio $\pr$ we first randomly initialize the number of subspaces $k^1, \ldots, k^L$ for each layer (Line~\ref{lin:randominit}). Based on given values for each $k^\ell$ we then solve for the optimal ranks $j^1, \ldots, j^L$ such that the overall compression ratio is satisfied (Line~\ref{lin:opt_ranks}). Note that the maximum error $\eps$ is minimized if all errors are equal. Thus solving for the ranks in Line~\ref{lin:opt_ranks} entails guessing a value for $\eps$, computing the resulting network size, and repeating the process until the desired $\pr$ is satisfied, e.g. via binary search.

Subsequently, we re-assign the number of subspaces $k^\ell$ for each layer by iterating through the finite set of possible values for $k^\ell$ (Line~\ref{lin:opt_k}) and choosing the one that minimizes the relative error for the current layer budget $b^\ell$ (computed in Line~\ref{lin:layer_budget}). Note that we can efficiently approximate the relative error by leveraging Theorem~\ref{thm:rel_error}. We then iteratively repeat both steps until convergence (Lines~\ref{lin:conv_start}-\ref{lin:conv_end}). To improve the quality of the local optimum we initialize the procedure with multiple random seeds (Lines~\ref{lin:opt_start}-\ref{lin:opt_end}) and pick the allocation with the lowest error (Line~\ref{lin:best_allocation}).

We note that we make repeated calls to our decomposition subroutine (i.e. SVD; Lines~\ref{lin:opt_ranks}, \ref{lin:opt_k}) highlighting the necessity for it to be efficient and cheap to evaluate. Moreover, we can further reduce the computational complexity by leveraging Theorem~\ref{thm:rel_error} as mentioned above.

Additional details pertaining to ALDS are provided in the supplementary material.

\paragraph{Extensions.}
Here, we use SVD with multiple subspaces as per-layer compression method. However, we note that ALDS can be readily extended to any desired \emph{set} of low-rank compression techniques. Specifically, we can replace the local step of Line~\ref{lin:opt_k} by a search over different methods, e.g., Tucker decomposition, PCA, or other SVD compression schemes, and return the best method for a given budget. In general, we may combine ALDS with any low-rank compression as long as we can efficiently evaluate the per-layer error of the compression scheme. In the supplementary material, we discuss some preliminary results that highlight the promising performance of such extensions.
\section{Experiments}
\label{sec:experiments}

\begin{figure}[b]
\small
\centering
\begin{minipage}[t]{0.75\textwidth}\vspace{0pt}%
    \centering
    \includegraphics[width=\textwidth]{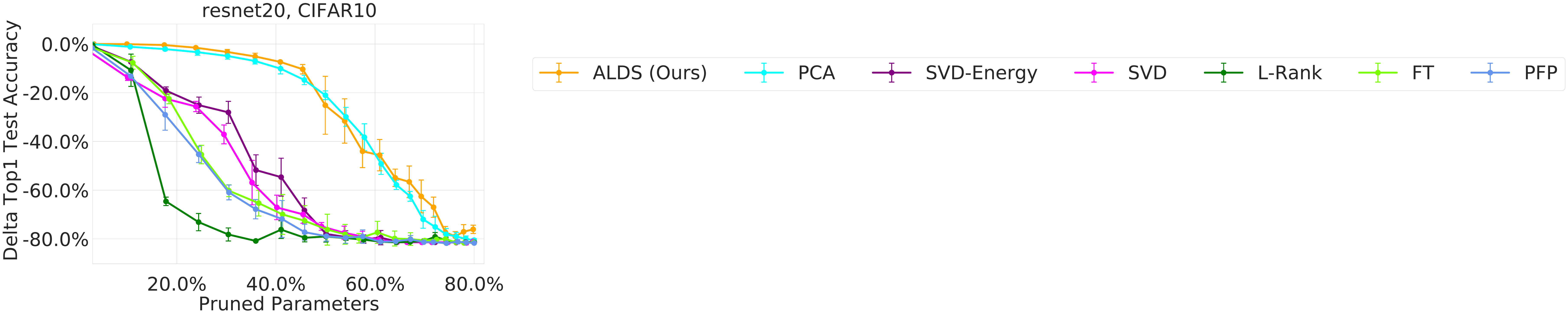}
    \vspace{-3ex}
\end{minipage}%
\hfill
\begin{minipage}[t]{0.32\textwidth}
    \includegraphics[width=\textwidth]{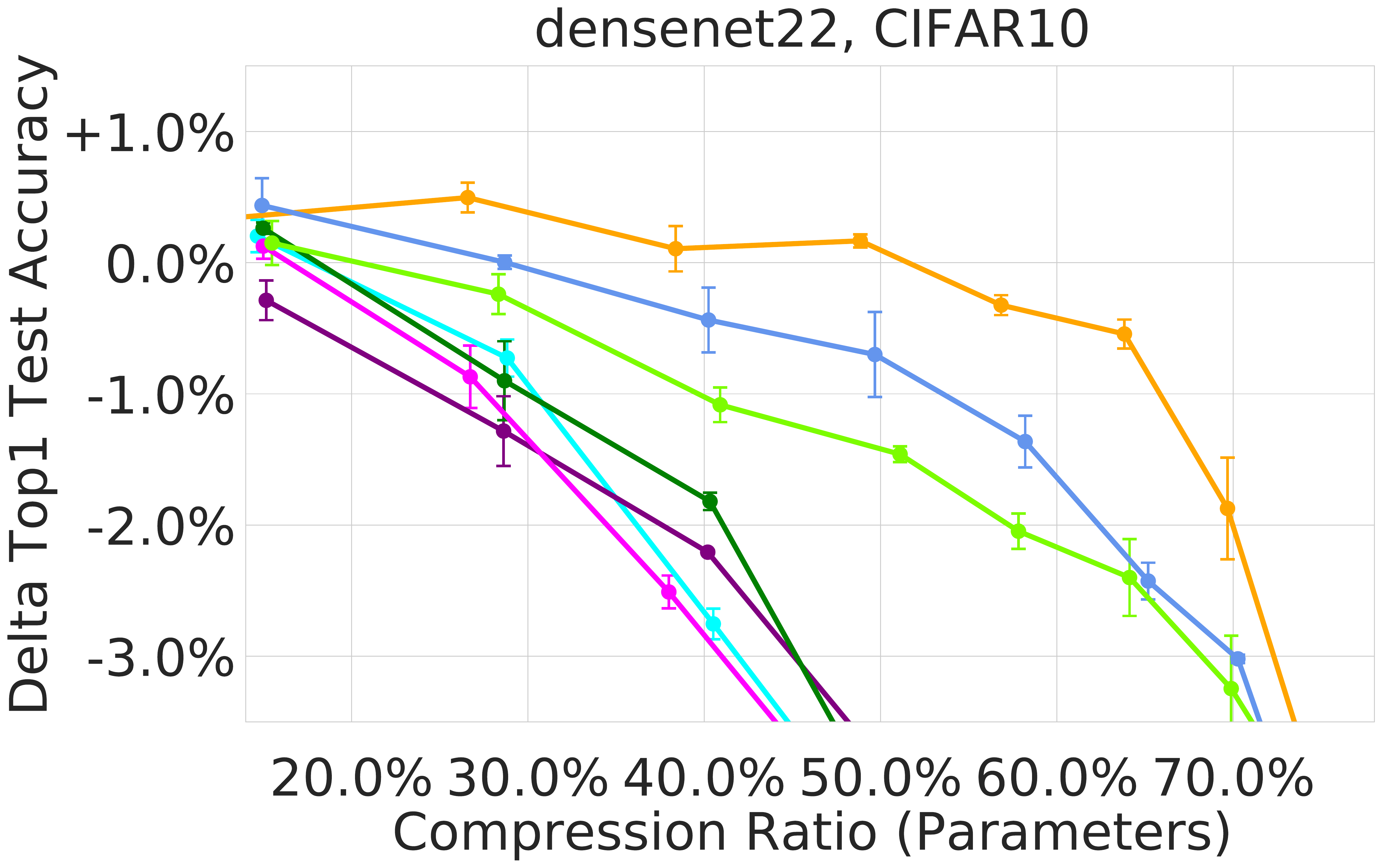}
    \vspace{-3.5ex}
    \subcaption{DenseNet22}
\end{minipage}%
\begin{minipage}[t]{0.32\textwidth}
    \includegraphics[width=\textwidth]{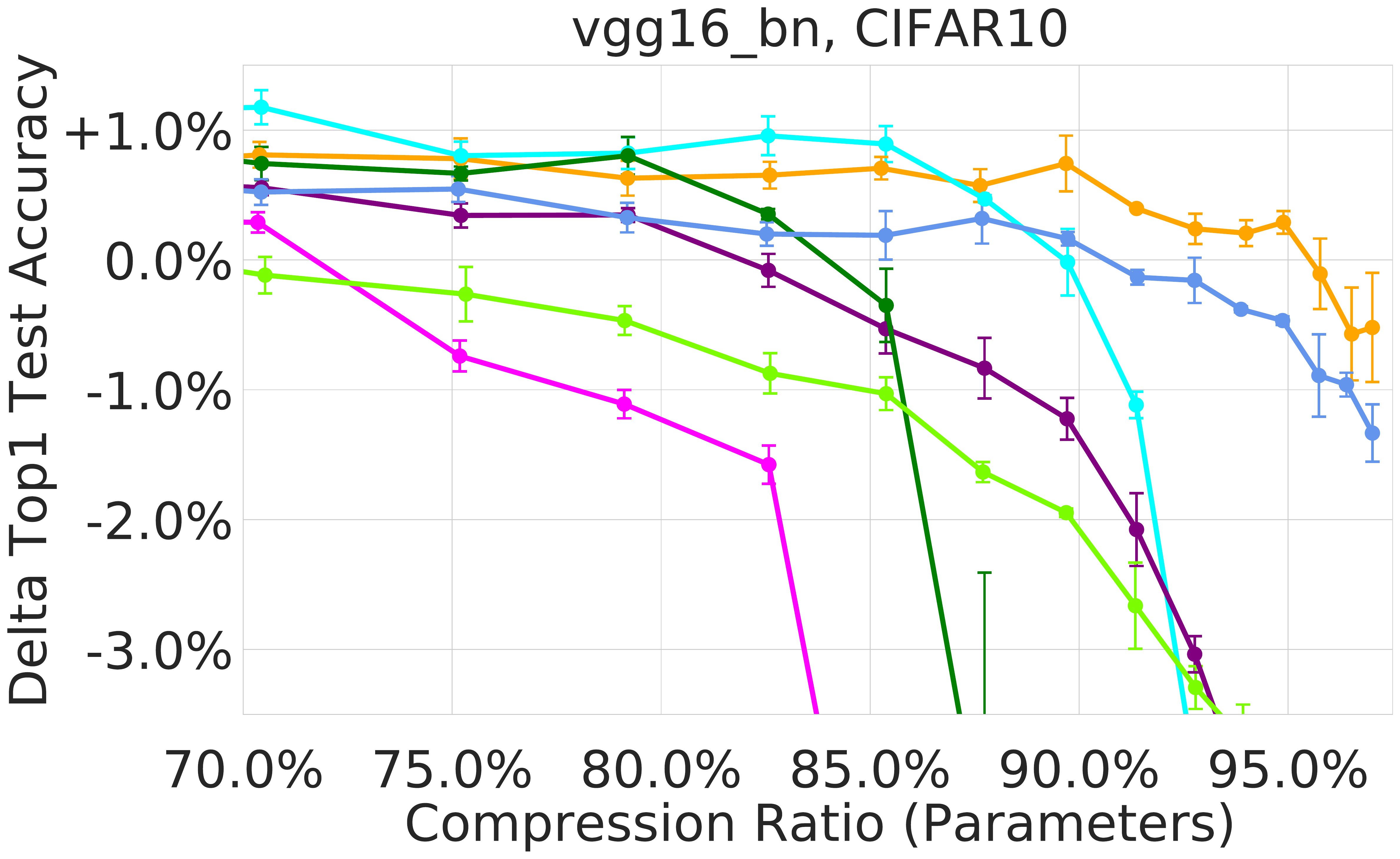}
    \vspace{-3.5ex}
    \subcaption{VGG16}
\end{minipage}%
\begin{minipage}[t]{0.32\textwidth}
    \includegraphics[width=\textwidth]{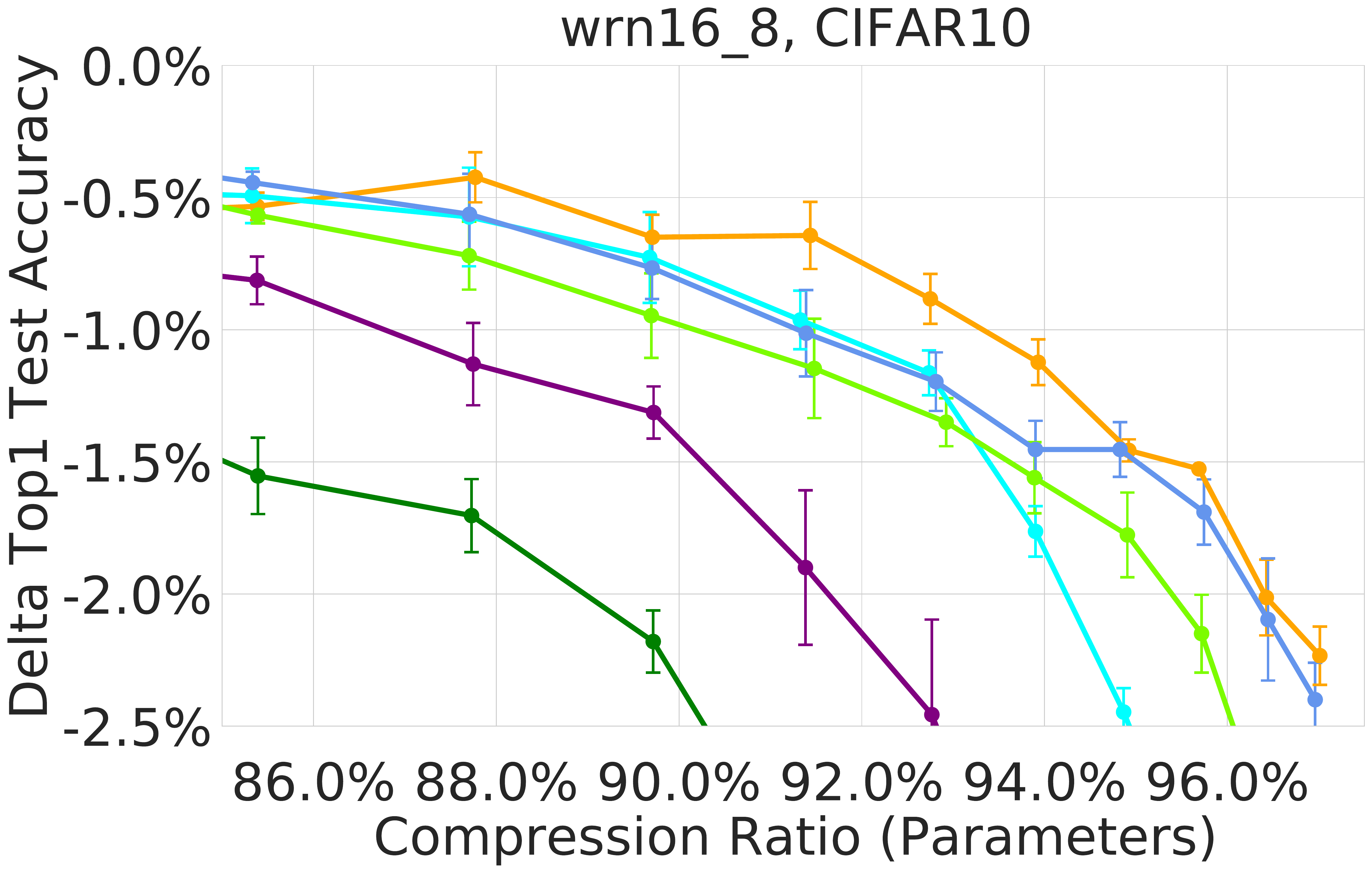}
    \vspace{-3.5ex}
    \subcaption{WRN16-8}
\end{minipage}%
\caption{One-shot compress+retrain experiments on CIFAR10 with baseline comparisons.}
\label{fig:cifar_oneshot}
\end{figure}

\paragraph{Networks and datasets.}
We study various standard network architectures and data sets. Particularly, we test our compression framework on ResNet20~\citep{he2016deep}, DenseNet22~\citep{huang2017densely}, WRN16-8~\citep{zagoruyko2016wide}, and VGG16~\citep{Simonyan14} on CIFAR10~\citep{torralba200880}; ResNet18~\citep{he2016deep}, AlexNet~\citep{Alex2012}, and MobileNetV2~\citep{sandler2018mobilenetv2} on ImageNet~\citep{ILSVRC15}; and on Deeplab-V3~\citep{chen2017rethinking} with a ResNet50 backbone on Pascal VOC segmentation data~\citep{everingham2015pascal}.

\paragraph{Baselines.}
We compare ALDS to a diverse set of low-rank compression techniques. Specifically, we have implemented PCA~\citep{zhang2015accelerating}, SVD with energy-based layer allocation (SVD-Energy) following \citet{alvarez2017compression, Wen2017}, and simple SVD with constant per-layer compression~\citep{Denton2014}. Additionally, we also implemented the recent learned rank selection mechanism (L-Rank) of~\citet{idelbayev2020low}.
Finally, we implemented two recent filter pruning methods, i.e., FT of~\citet{li2016pruning} and PFP of~\citet{liebenwein2020provable}, as alternative compression techniques for densely compressed networks. Additional comparisons on ImageNet are provided in Section~\ref{sec:experiments_iterative}.

\paragraph{Retraining.}
For our experiments, we study one-shot and iterative learning rate rewinding inspired by~\citet{renda2020comparing} for various amounts of retraining. In particular, we consider the following unified compress-retrain pipeline across all methods: 
\begin{enumerate}[leftmargin=12pt, labelwidth=1pt, topsep=0pt, partopsep=0pt,itemsep=0pt]
    \item 
    \textsc{Train} for $e$ epochs according to the standard training schedule for the respective network. 
    \item 
    \textsc{Compress} the network according to the chosen method. 
    \item 
    \textsc{Retrain} the network for $r$ epochs using the training hyperparameters from epochs $[e-r,e]$. 
    \item
    \textsc{Iteratively} repeat  1.-3. after projecting the decomposed layers back (optional).
\end{enumerate}

\begin{figure*}[t!]
\centering
\small
\begin{minipage}[t]{1.0\textwidth}\vspace{0pt}%
    \centering
    \includegraphics[width=0.75\textwidth]{fig/legend/comparisons_horizontal.pdf}
    \vspace{0.0ex}
\end{minipage}
\hfill
\begin{minipage}[t]{0.03\textwidth}%
\vspace{-18ex}%
\rotatebox{90}{ResNet20, CIFAR10}%
\hfill
\end{minipage}%
\begin{minipage}[t]{0.31\textwidth}
    \includegraphics[width=\textwidth]{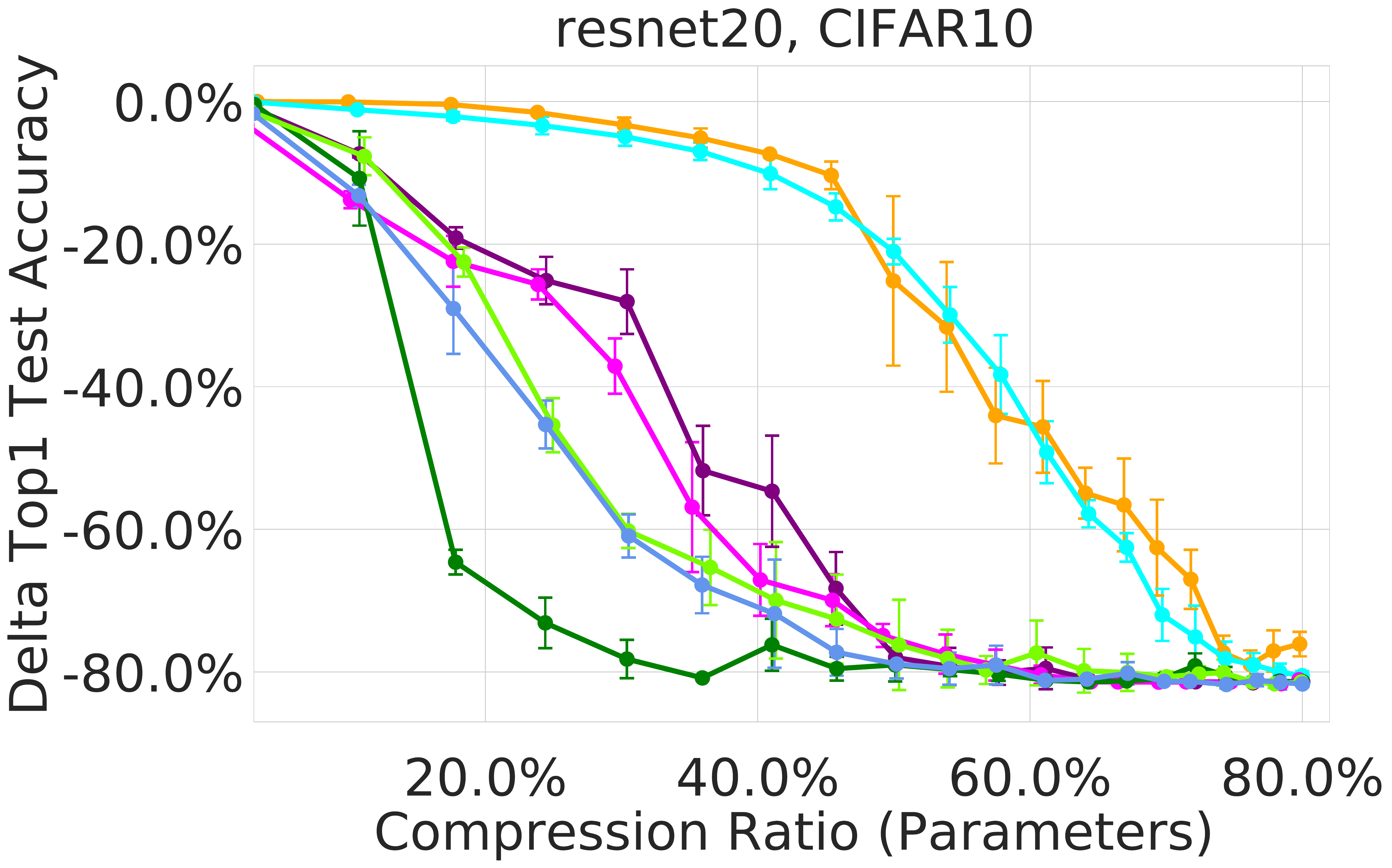}%
    \vspace{-0.5ex}%
    \subcaption{Compress-only (r=0)}
    \label{fig:cifar_resnet20_prune}
\end{minipage}%
\hspace{0.5ex}
\begin{minipage}[t]{0.31\textwidth}
    \includegraphics[width=\textwidth]{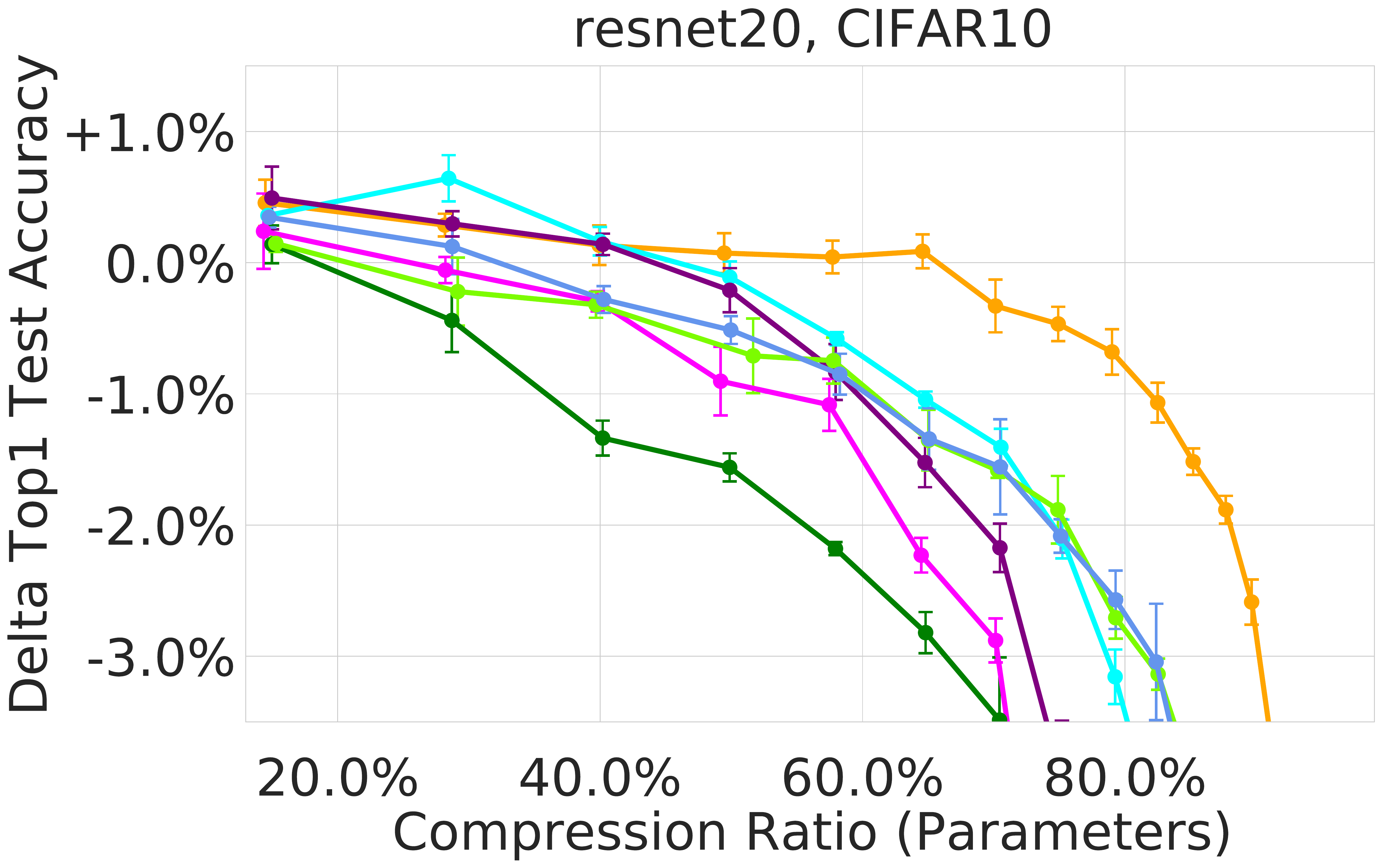}%
    \vspace{-0.5ex}%
    \subcaption{One-shot (r=e)}
    \label{fig:cifar_resnet20_retrain}
\end{minipage}%
\hspace{1ex}
\begin{minipage}[t]{0.31\textwidth}
    \includegraphics[width=\textwidth,height=0.63\textwidth]{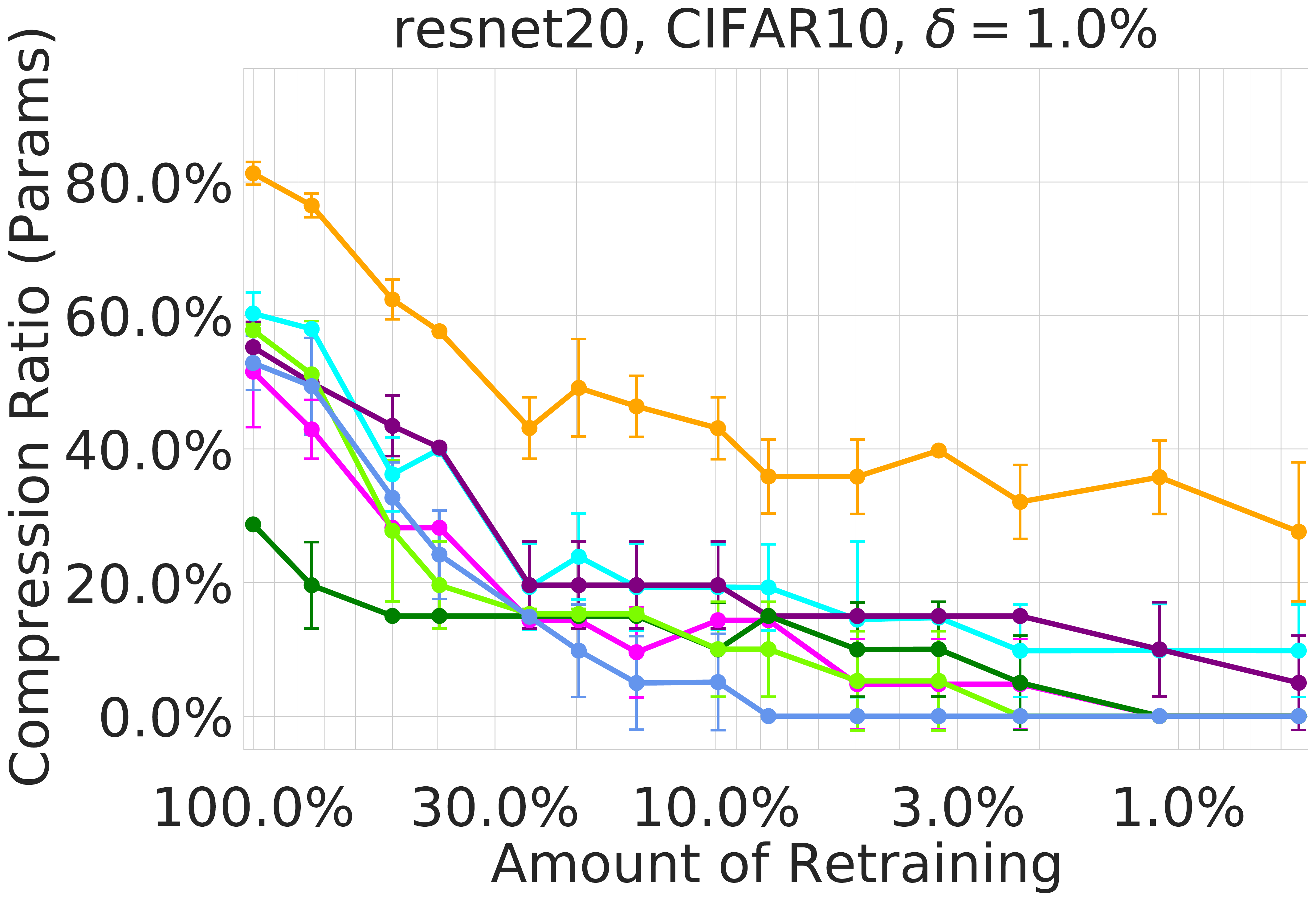}%
    \vspace{-0.5ex}%
    \subcaption{Retrain sweep ($\Delta\text{-Top1}$$\geq$-$1\%$)}
    \label{fig:cifar_resnet20_retrainsweep}
    \vspace{1ex}
\end{minipage}
\begin{minipage}[t]{1.0\textwidth}\vspace{0pt}%
\hspace{5ex}
\hrulefill
\hspace{0.5ex}
\vspace{1ex}
\end{minipage}
\begin{minipage}[t]{0.03\textwidth}%
\vspace{-18ex}%
\rotatebox{90}{ResNet18, ImageNet}%
\hfill
\end{minipage}%
\begin{minipage}[t]{0.31\textwidth}
    \includegraphics[width=\textwidth]{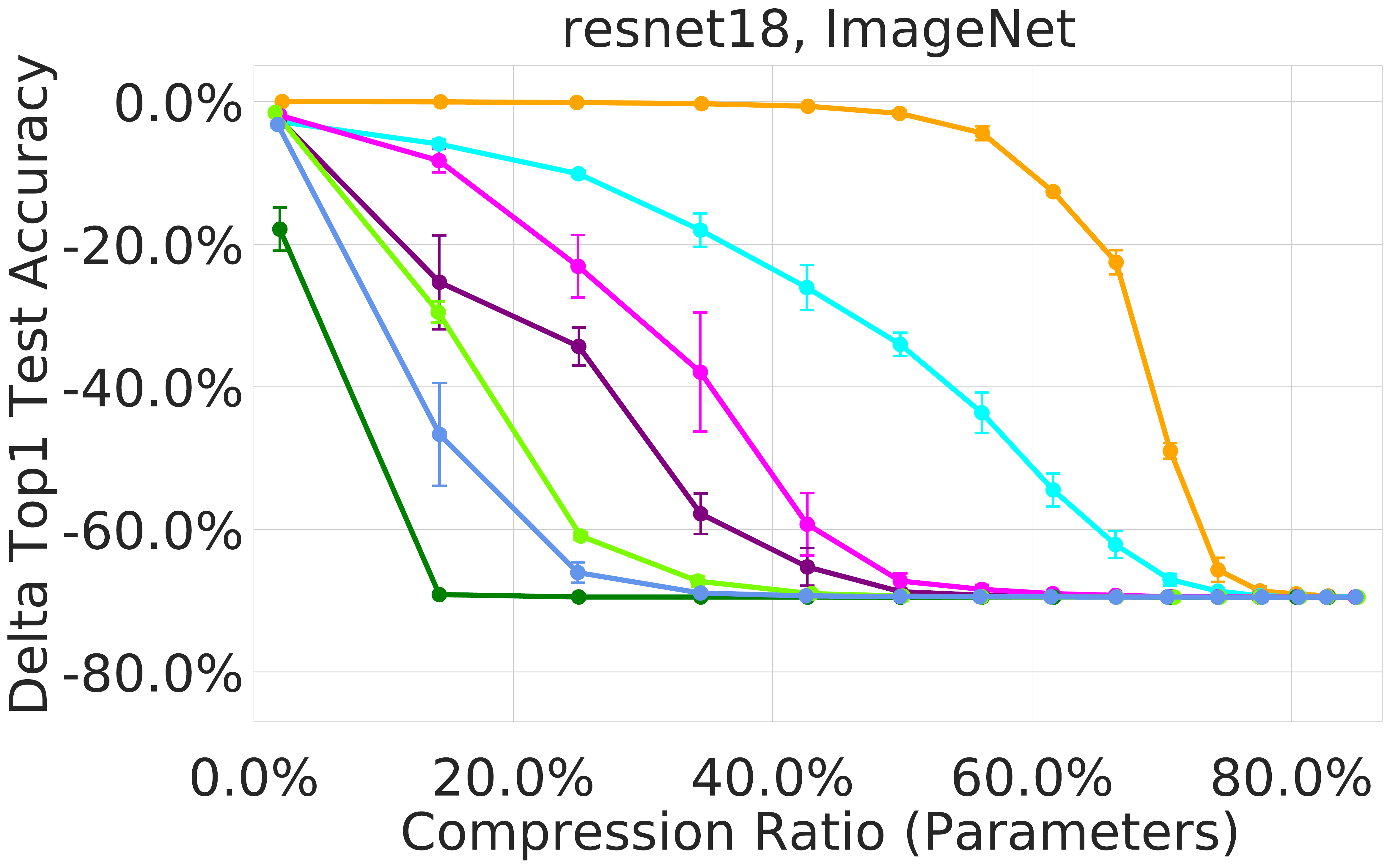}%
    \vspace{-0.5ex}%
    \subcaption{Compress-only (r=0)}
    \label{fig:imagenet_resnet18_prune}
\end{minipage}%
\vspace{1ex}
\begin{minipage}[t]{0.31\textwidth}
    \includegraphics[width=\textwidth]{fig/imagenet_retrain/resnet18_ImageNet_e90_re90_retrain_int10_ImageNet_acc_param.pdf}%
    \vspace{-0.5ex}%
    \subcaption{One-shot (r=e)}
    \label{fig:imagenet_resnet18_retrain}
\end{minipage}%
\hspace{1ex}
\begin{minipage}[t]{0.31\textwidth}
    \includegraphics[width=\textwidth,height=0.63\textwidth]{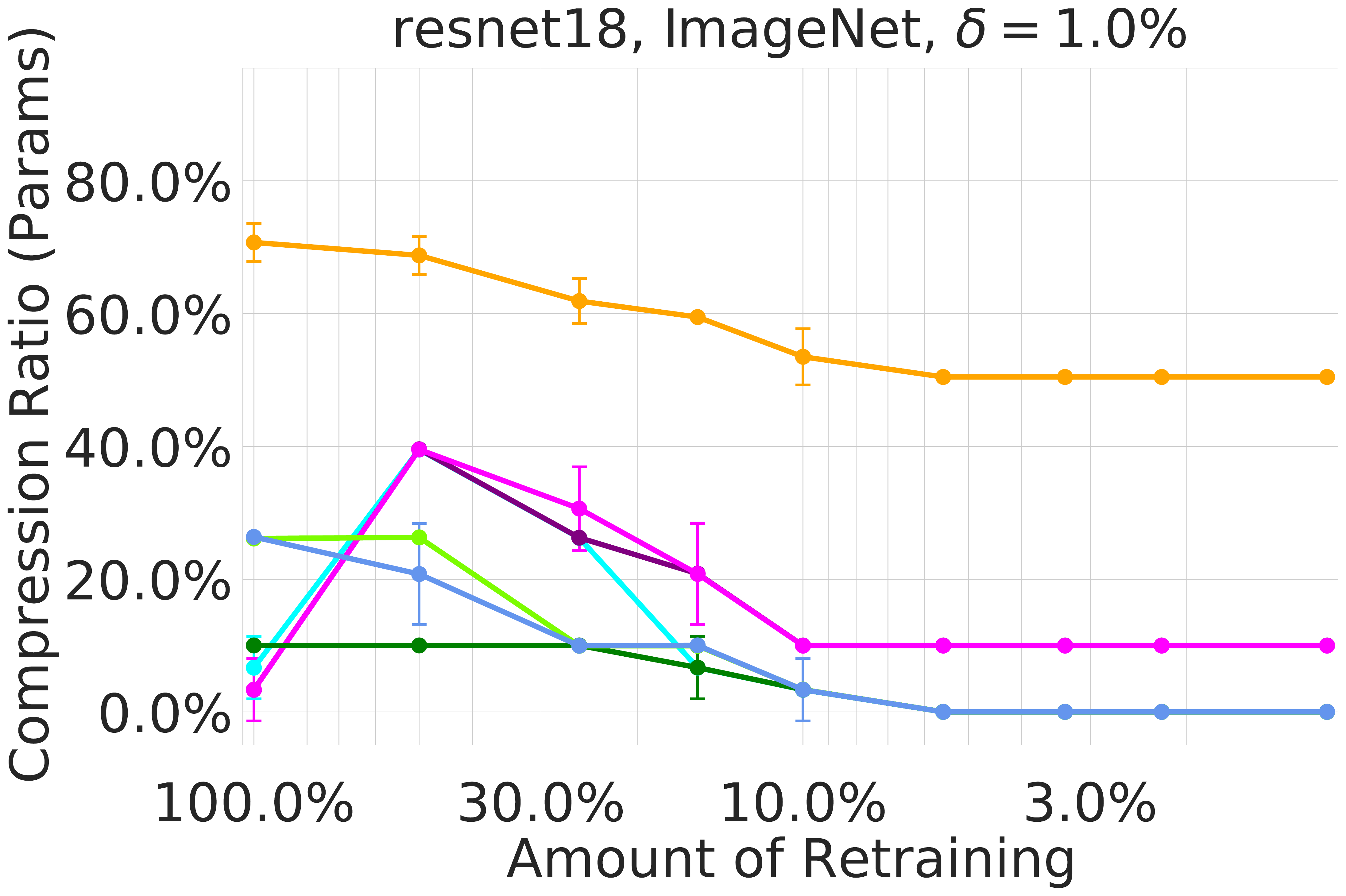}%
    \vspace{-0.5ex}%
    \subcaption{Retrain sweep ($\Delta\text{-Top1}$$\geq$-$1\%$)}
    \label{fig:imagenet_resnet18_retrainsweep}
\end{minipage}
\caption{
The size-accuracy trade-off for various compression ratios, methods, and networks. Compression was performed after training and networks were re-trained once for the indicated amount (\textbf{one-shot}).
(a, b, d, e): the difference in test accuracy for fixed amounts of retraining. (c, f): the  maximal compression ratio with less-than-1\% accuracy drop  for variable amounts of retraining.
}
\label{fig:cifar_imagenet}
\end{figure*}

\paragraph{Reporting metrics.}
We report Top-1, Top-5, and IoU test accuracy as applicable for the respective task. For each compressed network we also report the compression ratio, i.e., relative reduction, in terms of parameters and floating point operations denoted by CR-P and CR-F, respectively.
Each experiment was repeated $3$ times and we report mean and standard deviation.

\begin{figure}[b!]
\small
\centering
\begin{minipage}[t]{0.39\textwidth}
    \includegraphics[width=\textwidth]{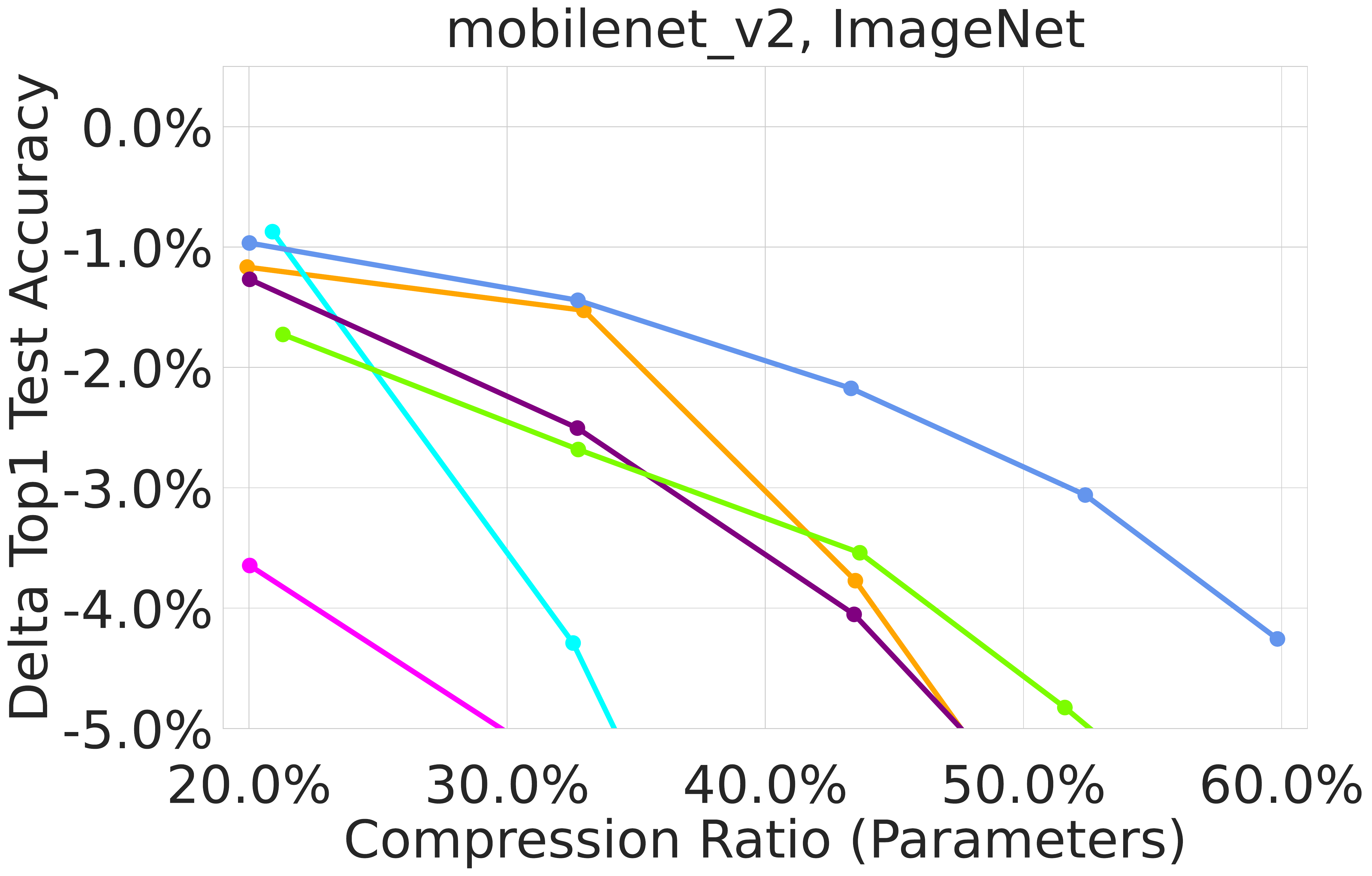}
    \subcaption{MobileNetV2 (ImageNet)}
    \label{fig:mobilenet}
\end{minipage}%
\begin{minipage}[t]{0.39\textwidth}
    \includegraphics[width=\textwidth]{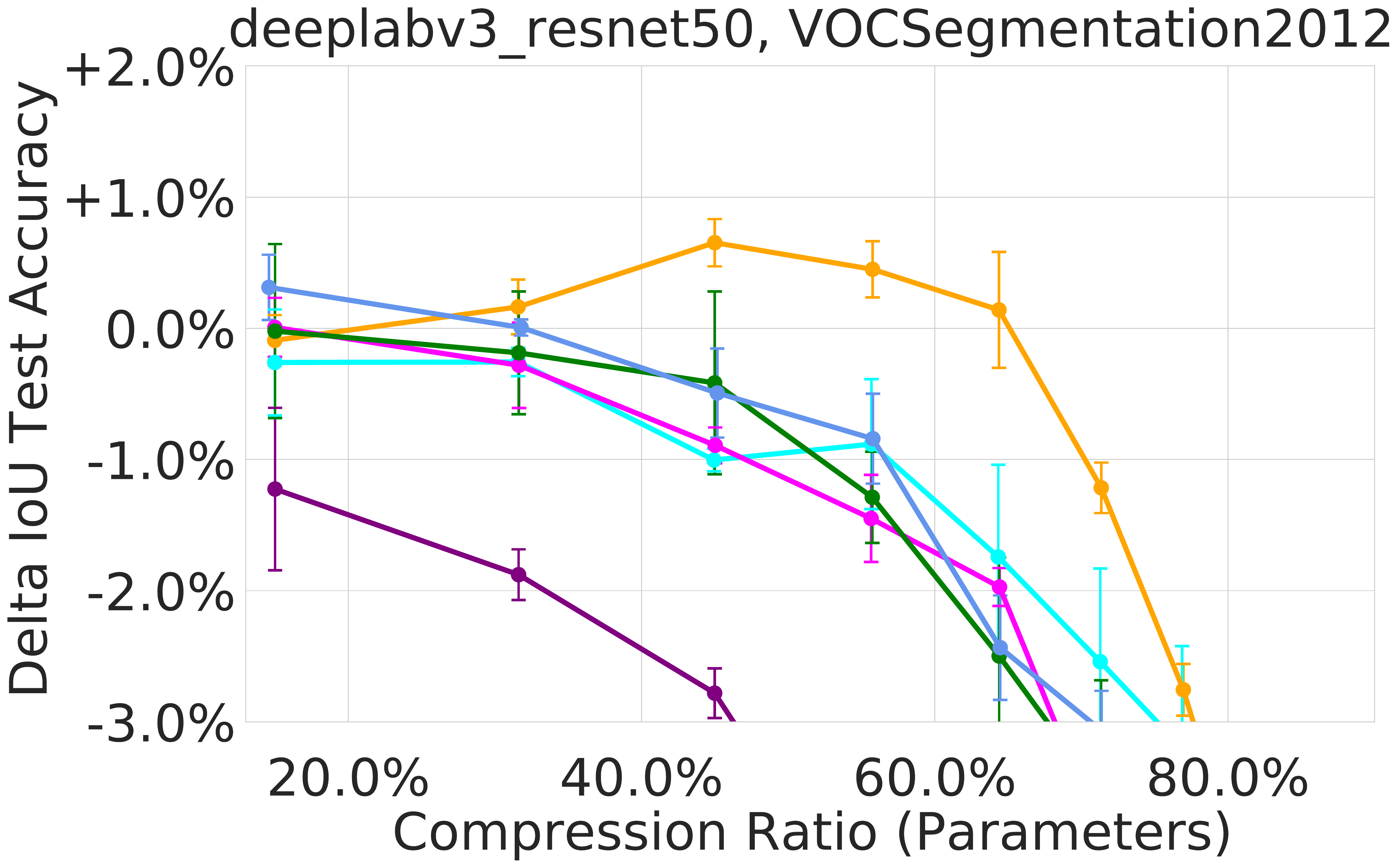}
    \subcaption{DeeplabV3-ResNet50 (VOC)}
    \label{fig:voc}
\end{minipage}%
\begin{minipage}[t]{0.13\textwidth}
    \vspace{-20ex}
    \includegraphics[width=\textwidth]{fig/legend/comparisons.pdf}
    \subcaption*{\phantom{legend}}
\end{minipage}%
\caption{One-shot compress+retrain experiments on various architectures and datasets with baseline comparisons.}
\label{fig:mbv2_deeplab_oneshot}
\end{figure}


\subsection{One-shot Compression on CIFAR10, ImageNet, and VOC with Baselines}
\label{sec:experiments_oneshot}
We train reference networks on CIFAR10, ImageNet, and VOC, and then compress and retrain the networks \emph{once} with $r=e$ for various baseline comparisons and compression ratios.

\paragraph{CIFAR10.}
In Figure~\ref{fig:cifar_oneshot}, we provide results for DenseNet22, VGG16, and WRN16-8 on CIFAR10. Notably, our approach is able to outperform existing baselines approaches across a wide range of tested compression ratios. 
Specifically, in the region where the networks incur only minimal drop in accuracy ($\Delta\text{-Top1}$$\geq$$-1\%$) ALDS is particularly effective.

\begin{table}[t!]
\caption{
Baseline results for $\Delta\text{-Top1}$$\geq$$-0.5\%$ for one-shot with highest CR-P and CR-F among tensor decomposition methods bolded for each network. Results coincide with Figures~\ref{fig:cifar_oneshot},~\ref{fig:cifar_imagenet},~\ref{fig:voc}.
}
\label{tab:oneshot}
\begingroup
\setlength{\tabcolsep}{3.05pt} 
\renewcommand{\arraystretch}{1.3} 
\centering
\begin{adjustbox}{width=1.0\textwidth}
\small
\begin{tabular}{|c|c|c||c|cccc|cc|}
\hline
& \multirow{2}{*}{Model}
& \multirow{2}{*}{Metric}
& \multicolumn{5}{c|}{Tensor decomposition}
& \multicolumn{2}{c|}{Filter pruning}
\\
& & 
& \textbf{ALDS (Ours)}
& PCA
& SVD-Energy
& SVD
& L-Rank
& FT
& PFP
\\
\hline \hline
\multirow{8}{*}{\rotatebox{90}{CIFAR10}}
& \multirow{2}{*}{\shortstack{ResNet20 \\ Top1: 91.39}}
& $\Delta$-Top1
& -0.47 & -0.11 & -0.21 & -0.29 & -0.44 & -0.32 & -0.28
\\
& & CR-P, CR-F
& \textbf{74.91}, \textbf{67.86}
& 49.88, 48.67
& 49.88, 49.08
& 39.81, 38.95
& 28.71, 54.89
& 39.69, 39.57
& 40.28, 30.06
\\
\cline{2-10}
& \multirow{2}{*}{\shortstack{VGG16 \\ Top1: 92.78}}
& $\Delta$-Top1
& -0.11 & -0.02 & -0.08 & +0.29 & -0.35 & -0.47 & -0.47 \\
& & CR-P, CR-F
& \textbf{95.77}, \textbf{86.23}
& 89.72, 85.84
& 82.57, 81.32
& 70.35, 70.13
& 85.38, 75.86
& 79.13, 78.44
& 94.87, 84.76
\\
\cline{2-10}
& \multirow{2}{*}{\shortstack{DenseNet22 \\ Top1: 89.88}}
& $\Delta$-Top1
& -0.32 & +0.20 & -0.29 & +0.13 & +0.26 & -0.24 & -0.44 \\
& & CR-P, CR-F
& \textbf{56.84}, \textbf{61.98}
& 14.67, 34.55
& 15.16, 19.34
& 15.00, 15.33
& 14.98, 35.21
& 28.33, 29.50
& 40.24, 43.37
\\
\cline{2-10}
& \multirow{2}{*}{\shortstack{WRN16-8 \\ Top1: 89.88}}
& $\Delta$-Top1
& -0.42 & -0.49 & -0.41 & -0.96 & -0.45 & -0.32 & -0.44 \\
& & CR-P, CR-F
& \textbf{87.77}, 79.90
& 85.33, \textbf{83.45}
& 64.75, 60.94
& 40.20, 39.97
& 49.86, 58.00
& 82.33, 75.97
& 85.33, 80.68
\\
\hline \hline
\multirow{4}{*}{\rotatebox{90}{ImageNet}}
& \multirow{2}{*}{ \shortstack{ResNet18 \\ \scriptsize Top1: 69.62, Top5: 89.08}}
& $\Delta$-Top1, Top5
& -0.40, -0.05 
& -0.95,-0.37 
& -1.49, -0.64
& -1.75, -0.72
& -0.71, -0.23
& +0.10, +0.42
& -0.39, -0.08
\\
& & CR-P, CR-F
& \textbf{66.70}, \textbf{43.51}
& 9.99, 12.78
& 39.56, 40.99
& 50.38, 50.37
& 10.01, 32.64
& 9.86, 11.17
& 26.35, 17.96 
\\
\cline{2-10}
& \multirow{2}{*}{ \shortstack{MobileNetV2 \\ \scriptsize Top1: 71.85, Top5: 90.33}}
& $\Delta$-Top1, Top5
& -1.53, -0.73
& -0.87, -0.55
& -1.27, -0.57
& -3.65, -2.07
& -19.08, -13.40
& -1.73, -0.85
& -0.97, -0.40
\\
& & CR-P, CR-F
& \textbf{32.97}, \textbf{11.01}
& 20.91, 0.26
& 20.02, 8.57
& 20.03, 31.99
& 20.00, 61.97
& 21.31, 20.23
& 20.02, 7.96
\\
\hline \hline
\multirow{1}{*}{\rotatebox{90}{VOC \hspace{0.4ex}}}
& \multirow{2}{*}{ \shortstack{DeeplabV3 \\ \scriptsize IoU: 91.39 Top1: 99.34}}
& $\Delta$-IoU, Top1
& +0.14, -0.15
& -0.26, -0.02
& -1.88, -0.47
& -0.28, -0.18
& -0.42, -0.09
& -4.30, -0.91 
& -0.49, -0.21
\\
& & CR-P, CR-F
& \textbf{64.38}, \textbf{64.11}
& 55.68, 55.82
& 31.61, 32.27
& 31.64, 31.51
& 44.99, 45.02
& 15.00, 15.06
& 45.17, 43.93
\\
\hline
\end{tabular}
\end{adjustbox}
\endgroup
\end{table}

\paragraph{ResNets (CIFAR10 and ImageNet).}
Moreover, we tested ALDS on ResNet20 (CIFAR10) and ResNet18 (ImageNet) as shown in Figure~\ref{fig:cifar_imagenet}. For these experiments, we performed a grid search over both multiple compression ratios and amounts of retraining. Here, we highlight that ALDS outperforms baseline approaches even with significantly less retraining. On Resnet 18 (ImageNet) ALDS can compress over 50\% of the parameters with minimal retraining (1\% retraining) and a less-than-1\% accuracy drop compared to the best comparison methods (40\% compression with 50\% retraining).

\begin{wraptable}{r}{0.5\textwidth}
    \vspace{-1ex}
    \centering
    \caption{AlexNet and ResNet18 Benchmarks on ImageNet. We report Top-1, Top-5 accuracy and percentage reduction of FLOPs (CR-F). Best results with less than 0.5\% accuracy drop are bolded.}
    \label{tab:imagenetbenchmarks}
    \vspace{-1ex}
    \setlength{\tabcolsep}{3.0pt} 
    \renewcommand{\arraystretch}{1.2} 
    \small
    \begin{adjustbox}{width=0.5\textwidth}
    \begin{tabular}{|c|l|ccc|}
        \hline
        & Method & $\Delta$-Top1 & $\Delta$-Top5  & CR-F (\%)  \\
        \hline\hline
        \multirow{17}{*}{{\rotatebox{90}{\scriptsize \textbf{ResNet18}, Top1, 5: 69.64\%, 88.98\%}}}
        & ALDS (Ours)
        & \textbf{-0.38} & \textbf{+0.04}  & \textbf{64.5} \\\
        & ALDS (Ours)
        & -1.37 & -0.56  & 76.3 \\
        \cline{2-5}
        & MUSCO~{\tiny\citep{gusak2019automated}}
        & -0.37   &  -0.20  & 58.67\\
        & TRP1~{\tiny\citep{Xu2020}}
        & -4.18     &-2.5     & 44.70\\
        & TRP1+Nu~{\tiny\citep{Xu2020}}
        & -4.25    & -2.61   & 55.15\\
        & TRP2+Nu~{\tiny\citep{Xu2020}}
        & -4.3    & -2.37   & 68.55 \\    
        & PCA~{\tiny\citep{zhang2015accelerating}}
        & -6.54    &-4.54   & 29.07 \\
        & Expand~{\tiny\citep{jaderberg2014speeding}}
        & -6.84    & -5.26  & 50.00 \\
        \cline{2-5}
        & PFP~{\tiny\citep{liebenwein2020provable}}
        &  -2.26  & -1.07 & 29.30 \\
        & SoftNet~{\tiny\citep{he2018soft}}
        & -2.54  & -1.2 &  41.80 \\
        & Median~{\tiny\citep{he2019filter}}
        &  -1.23  & -0.5  & 41.80 \\
        & Slimming~{\tiny\citep{liu2017learning}}
        & -1.77    &   -1.19  & 28.05 \\
        & Low-cost~{\tiny\citep{dong2017more}}
        & -3.55    &  -2.2  &34.64 \\
        & Gating~{\tiny\citep{hua2018channel}}
        & -1.52    &  -0.93 &37.88 \\
        & FT~{\tiny\citep{he2017channel}}
        & -3.08    &  -1.75 & 41.86 \\
        & DCP~{\tiny\citep{zhuang2018discrimination}}
        & -2.19    & -1.28 & 47.08 \\
        & FBS~{\tiny\citep{gao2018dynamic}}
        & -2.44    &  -1.36 & 49.49\\
        \hline\hline
        \multirow{10}{*}{\rotatebox{90}{\scriptsize \textbf{AlexNet}, Top1, 5: 57.30\%, 80.20\%}}
        & ALDS (Ours) 
        & -0.21 & -0.36  & 77.9 \\
        & ALDS (Ours)
        & \textbf{-0.41} & \textbf{-0.54} & \textbf{81.4} \\
        \cline{2-5}
        & Tucker~{\tiny\citep{Kim2015}}
        & N/A  & -1.87 & 62.40 \\
        & Regularize~{\tiny\citep{tai2015convolutional}}
        & N/A  & -0.54 &  74.35\\
        & Coordinate~{\tiny\citep{Wen2017}}
        & N/A  & -0.34 & 62.82 \\
        & Efficient~{\tiny\citep{kim2019efficient}}
        & -0.7 & -0.3  & 62.40 \\
        & L-Rank~{\tiny({\color{blue}Idelbayev et al.,~\citeyear{idelbayev2020low}})}
        & -0.13 & -0.13 & 66.77\\
        \cline{2-5}
        & NISP~{\tiny\citep{yu2018nisp}}
        & -1.43    & N/A  & 67.94 \\ 
        & OICSR~{\tiny\citep{li2019oicsr}}
        & -0.47    &N/A   & 53.70  \\ 
        & Oracle~{\tiny\citep{ding2019approximated}}
        & -1.13    &-0.67 & 31.97 \\
        \hline
    \end{tabular}
    \end{adjustbox}
    \vspace{-12ex}
\end{wraptable}

\paragraph{MobileNetV2 (ImageNet).}
Next, we tested and compared ALDS on the MobileNetV2 architecture for ImageNet as shown in Figure~\ref{fig:mobilenet}. Unlike the other networks, MobileNetV2 is a network already specifically optimized for efficient deployment and includes layer structures such as depth-wise and channel-wise convolutional operations. It is thus more challenging to find redundancies in the architecture. We find that ALDS can outperform existing tensor decomposition methods in this scenario as well.

\paragraph{VOC.}
Finally, we tested the same setup on a DeeplabV3 with a ResNet50 backbone trained on Pascal VOC 2012 segmentation data, see Figure~\ref{fig:voc}. We note that ALDS consistently outperforms other baselines methods in this setting as well (60\% CR-P vs.\ 20\% without accuracy drop).

\paragraph{Tabular results.}
Our one-shot results are again summarized in Table~\ref{tab:oneshot} where we report CR-P and CR-F for $\Delta\text{-Top1}$$\geq$$-0.5\%$. 
We observe that ALDS consistently improves upon prior work. 
We note that pruning usually takes on the order of seconds and minutes for CIFAR and ImageNet, respectively, which is usually faster than even a single training epoch.

\subsection{ImageNet Benchmarks}
\label{sec:experiments_iterative}
Next, we test our framework on two common ImageNet benchmarks, ResNet18 and AlexNet. We follow the compress-retrain pipeline outlined in the beginning of the section and repeat it iteratively to obtain higher compression ratios. Specifically, after retraining and before the next compression step we project the decomposed layers back to the original layer. This way, we avoid recursing on the decomposed layers. 

Our results are reported in Table~\ref{tab:imagenetbenchmarks} where we compare to a wide variety of available compression benchmarks (results were adapted directly from the respective papers). The middle part and bottom part of the table for each network are organized into low-rank compression and filter pruning approaches, respectively. Note that the reported differences in accuracy ($\Delta$-Top1 and $\Delta$-Top5) are relative to our baseline accuracies.
On ResNet18 we can reduce the number of FLOPs by 65\% with minimal drop in accuracy compared to the best competing method (MUSCO, 58.67\%). With a slightly higher drop in accuracy (-1.37\%) we can even compress 76\% of FLOPs. 
On AlexNet, our framework finds networks with -0.21\% and -0.41\% difference in accuracy with over 77\% and 81\% fewer FLOPs. This constitutes a more-than-10\% improvement in terms of FLOPs compared to current state-of-the-art (L-Rank) for similar accuracy drops.

\subsection{Ablation Study}
\label{sec:experiments_ablation}

\begin{wrapfigure}[12]{r}{0.35\textwidth}
    \vspace{-4ex}
	\centering
	\begin{tikzpicture}
    \node at (-0.1,0) {\includegraphics[width=0.35\textwidth]{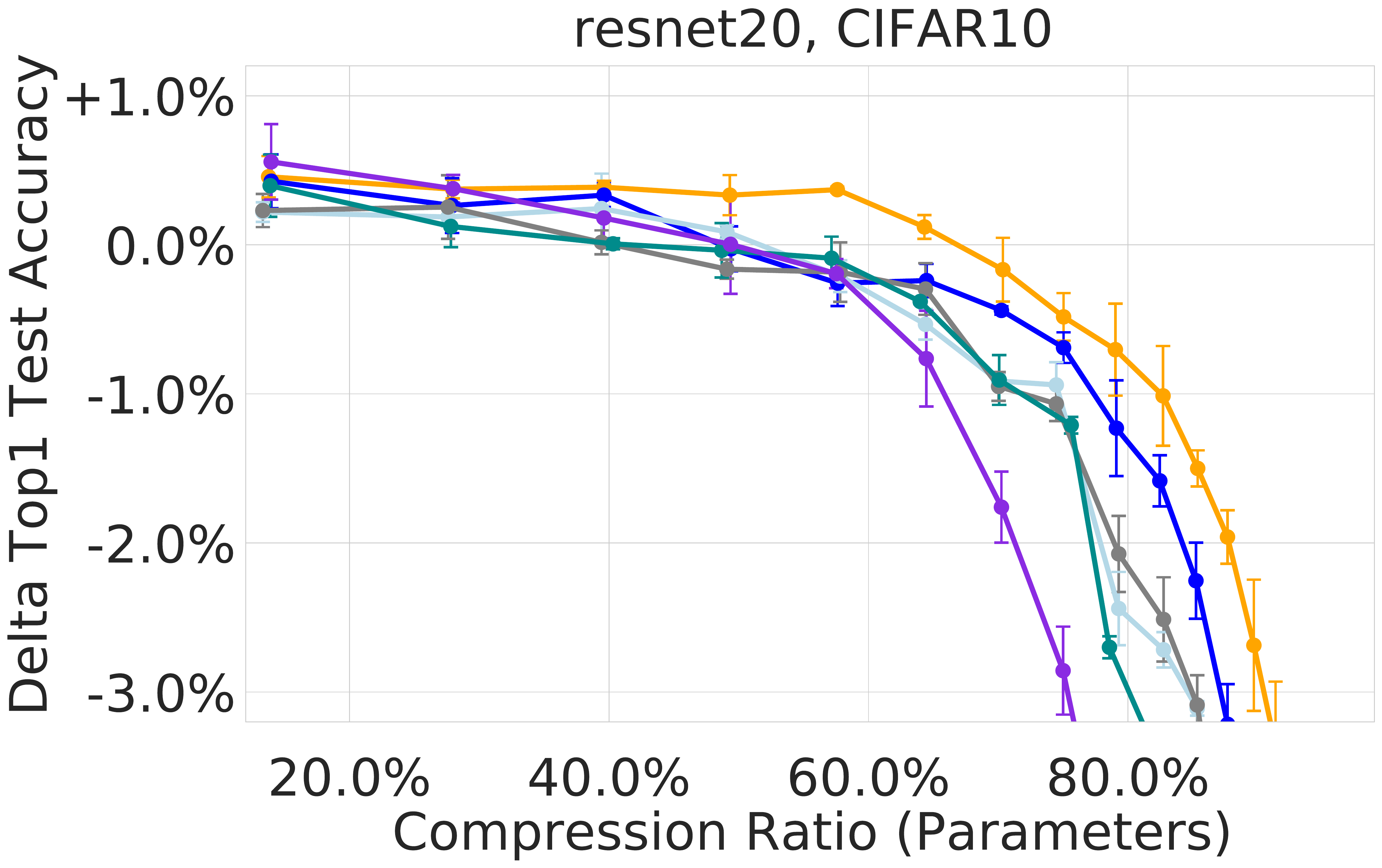}};
    \node at (-0.8,-0.32) {\includegraphics[width=0.11\textwidth]{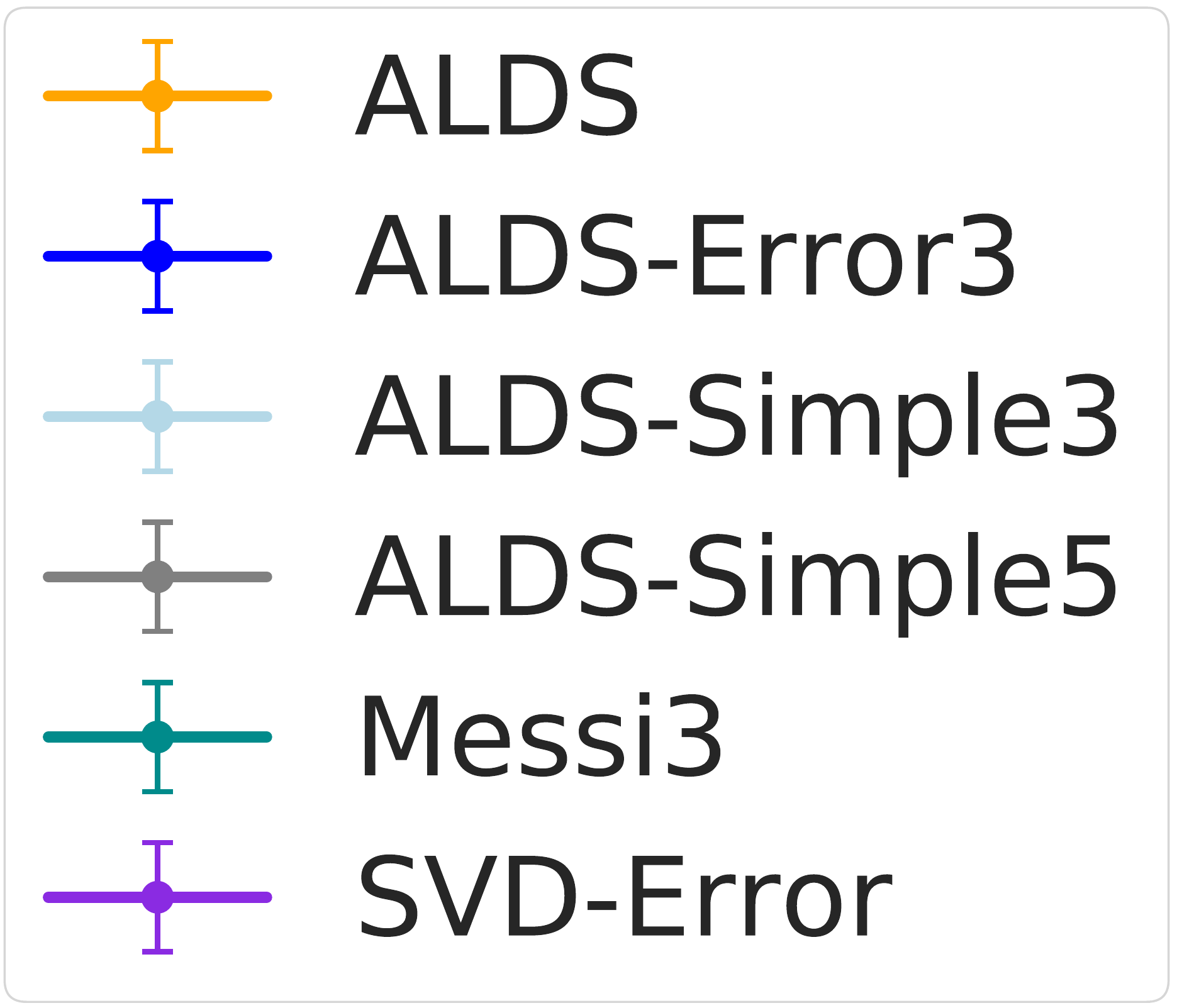}};
    \end{tikzpicture}
	\caption{One-shot ablation study of ALDS for Resnet20 (CIFAR10).}
	\label{fig:cifar_ablation}
\end{wrapfigure}

To investigate the different features of our method we ran compression experiments using multiple variations derived from our method, see Figure~\ref{fig:cifar_ablation}. For the simplest version of our method we consider a constant per-layer compression ratio and fix the value of $k$ to either 3 or 5 for all layers denoted by ALDS-Simple3 and ALDS-Simple5, respectively. Note that ALDS-Simple with $k=1$ corresponds to the SVD comparison method. For the version denoted by ALDS-Error3 we fix the number of subspaces per layer ($k=3$) and only run the global step of ALDS (Line~\ref{lin:opt_ranks} of Algorithm~\ref{alg:budget_allocation}) to determine the optimal per-layer compression ratio. The results of our ablation study in Figure~\ref{fig:cifar_ablation} indicate that our method clearly benefits from the combination of both the global and local step in terms of the number of subspaces ($k$) and the rank per subspace ($j$).

We also compare our subspace clustering (channel slicing) to the clustering technique of~\citet{maalouf2021deep}, which clusters the matrix columns using projective clustering. Specifically, we replace the channel slicing of ALDS-Simple3 with projective clustering (Messi3 in Figure~\ref{fig:cifar_ablation}). 
As expected Messi improves the performance over ALDS-Simple but only slightly and the difference is essentially negligible. Together with the computational disadvantages of Messi-like clustering methods (unstructured, NP-hard; see Section~\ref{sec:method_layer}) ALDS-based simple channel slicing is therefore the preferred choice in our context.

\vspace{-1ex}
\section{Related Work}
\label{sec:related}
\vspace{-1ex}
Our work builds upon prior work in neural network compression. We discuss related work focusing on pruning, low-rank compression, and global aspects of compression.

\paragraph{Unstructured pruning.}
Weight pruning~\citep{singh2020woodfisher,Lin2020Dynamic, molchanov2016pruning, molchanov2019importance, wang2021neural,yu2018nisp} techniques aim to reduce the number of individual weights, e.g., 
by removing weights with absolute values below a threshold~\citep{Han15, renda2020comparing}, or by  using  a mini-batch of data points to approximate the influence of each parameter on the loss function~\citep{baykal2018datadependent, sipp2019}.
However, since these approaches generate sparse instead of smaller models they require some form of sparse linear algebra support for runtime speed-ups.

\paragraph{Structured pruning.}
Pruning structures such as filters directly shrinks the network~\citep{li2019learning,luo2020autopruner,liu2019metapruning,chen2020storage, ye2018rethinking,lin2020hrank}.
Filters can be pruned using a score for each filter, e.g., weight-based~\citep{he2017channel, he2018soft} or data-informed~\citep{yu2018nisp, liebenwein2020provable}, and removing those with a score below a threshold.
It is worth noting that filter pruning is complimentary to low-rank compression.

\paragraph{Low-rank compression (local step).} 
A common approach to low-rank compression entails tensor decomposition including Tucker-decomposition~\citep{kim2015compression}, CP-decomposition~\citep{lebedev2014speeding}, Tensor-Train~\citep{garipov2016ultimate,novikov2015tensorizing} and others~\citep{Denil2013,jaderberg2014speeding,ioannou2017deep}. Other decomposition-like approaches include weight sharing, random projections, and feature hashing~\citep{Weinberger09,arora2018stronger, shi2009hash, Chen15Hash, Chen15Fresh, ullrich2017soft}.
Alternatively, low-rank compression can be performed via matrix decomposition (e.g., SVD) on flattened tensors as done by~\citet{Denton2014,yu2017compressing,sainath2013low,xue2013restructuring, tukan2020compressed} among others. 
\citet{Chen2018, maalouf2021deep,Denton2014} also explores the use of subspace clustering before applying low-rank compression to each cluster to improve the approximation error.
Notably, most prior work relies on some form of expensive approximation algorithm -- even to just solve the per-layer low-rank compression, e.g., clustering or tensor decomposition. In this paper, we instead focus on the global compression problem and show that simple compression techniques (SVD with channel slicing) are advantageous in this context as we can use them as efficient subroutines. We note that we can even extend our algorithm to multiple, different types of per-layer decomposition.

\paragraph{Network-aware compression (global step).} To determine the rank (or the compression ratio) of each layer, prior work suggests to account for compression during training~\citep{alvarez2017compression,Wen2017,Xu2020,ioannou2015training,ioannou2016training}, e.g, by training the network with a penalty that encourages the weight matrices to be low-rank. 
Others suggest to select the ranks using variational Bayesian matrix factorization~\citep{kim2015compression}. 
In their recent paper,~\citet{chin2020towards} suggest to produce an entire set of compressed networks with different accuracy/speed trade-offs. 
Our paper was also inspired by a recent line of work towards automatically choosing or learning the rank of each layer~\citep{idelbayev2020low,gusak2019automated,tiwari2021chipnet,zhang2015efficient,li2018constrained,zhang2015accelerating}.
We take such approaches further and suggest a global compression framework that incorporates multiple decomposition techniques with more than one hyper-parameter per layer (number of subspaces and ranks of each layer). This approach increases the number of local minima in theory and helps improving the performance in practice.
\vspace{-2ex}
\section{Discussion and Conclusion}
\vspace{-2ex}
\paragraph{Practical benefits.}
By conducting a wide variety of experiments across multiple data sets and networks we have shown the effectiveness and versatility of our compression framework compared to existing methods. The runtime of  ALDS is negligible compared to retraining and it can thus be efficiently incorporated into compress-retrain pipelines.

\paragraph{ALDS as modular compression framework.}
By separately considering the low-rank compression scheme for each layer (local step) and the actual low-rank compression (global step) we have provided a framework that can efficiently search over a set of desired hyperparameters that describe the low-rank compression. Naturally, our framework can thus be generalized to other compression schemes (such as tensor decomposition) and we hope to explore these aspects in future work.

\paragraph{Error bounds lead to global insights.}
At the core of our contribution is our error analysis that enables us to link the global and local aspects of layer-wise compression techniques. We leverage our error bounds in practice to compress networks more effectively via an automated rank selection procedure without additional tedious hyperparameter tuning. 
However, we also have to rely on a proxy definition (maximum relative error) of the compression error to enable a tractable solution that we can implement efficiently.
We hope these observations invigorate future research into compression techniques that come with tight error bounds -- potentially even considering retraining -- which can then naturally be wrapped into a global compression framework.

\label{sec:conclusion}

\section*{Acknowledgments}
This research was sponsored by the United States Air Force Research Laboratory and the United States Air Force Artificial Intelligence Accelerator and was accomplished under Cooperative Agreement Number FA8750-19-2-1000. The views and conclusions contained in this document are those of the authors and should not be interpreted as representing the official policies, either expressed or implied, of the United States Air Force or the U.S. Government. The U.S. Government is authorized to reproduce and distribute reprints for Government purposes notwithstanding any copyright notation herein.
This work was further supported by the Office of Naval Research (ONR) Grant N00014-18-1-2830.


\bibliographystyle{plainnat}

\bibliography{misc/references}

\appendix
\changelocaltocdepth{2}
\clearpage
\title{Supplementary Material}
\setauthors
\maketitleextranoauthors
\tableofcontents
\newpage

\section{Further Method Details}
\label{sec-supp:method}
In this section, we provide more details pertaining to our method.

\subsection{Method Preliminaries}

Our layer-wise compression technique hinges upon the insight that any linear layer may be cast as a matrix multiplication, which enables us to rely on SVD as compression subroutine. Focusing on convolutions we show how such a layer can be recast as matrix multiplication. Similar approaches have been used by~\citet{Denton2014, idelbayev2020low, Wen2017} among others.

\begin{figure}
    \centering
    \includegraphics[width=0.75\textwidth]{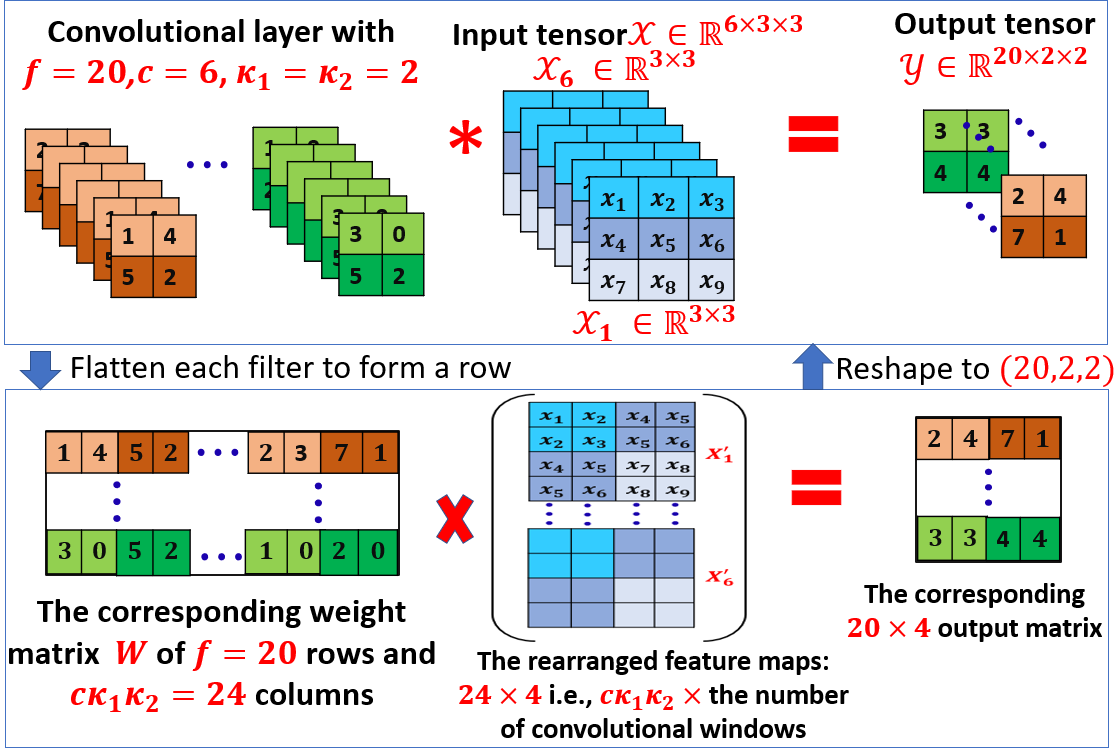}
    \caption{\small \textbf{Convolution to matrix multiplication.} A convolutional layer of $f=20$ filters, $c=6$ channels, and $2\times2$ kernel ($\kappa_1=\kappa_2=2$). The input tensor shape is $6 \times 3\times 3$. The corresponding weight matrix has $f=20$ rows (one row per filter) and $24$ columns ($c \times \kappa_1 \times \kappa_2$), as for the corresponding feature matrix, it has $24$ rows and $4$ columns, the $4$ here is the number of convolution windows (i.e., number of pixels/entries in each of the output feature maps). After multiplying those matrices, we reshape them to the desired shape to obtain the desired output feature maps.}
    \label{fig:conv-matmul}
\end{figure}

\paragraph{Convolution to matrix multiplication.}
For a given convolutional layer of $f$ filters, $c$ channels, $\kappa_1 \times \kappa_2$ kernel and an input feature map with $c$ features, each of size $m_1\times m_2$, we denote by $\Wt \in \REAL^{f\times c \times \kappa_1 \times \kappa_2}$ and $\Xt \in \REAL^{c \times m_1 \times m_2}$ the weight tensor and input tensor, respectively. 
Moreover, let $\W\in \REAL^{f \times c\kappa_1\kappa_2}$ denote the unfolded matrix operator of the layer constructed from $\Wt$ by flattening the $c$ kernels of each filter into a row and stacking the rows to form a matrix.
Finally, let $p$ denote the total number of sliding blocks and $\X \in \REAL^{c\kappa_1\kappa_2 \times \p}$ denote the unfolded input matrix, which is constructed from the input tensor $\Xt$ as follows: while simulating the convolution by sliding $\Wt$ along $\Xt$ we extract the sliding local blocks of $\Xt$ across all channels by flattening each block into a $c\kappa_1\kappa_2$-dimensional column vector and concatenating them together to form $\X$.
As illustrated in Figure~\ref{fig:conv-matmul} we may now express the convolution $\Yt = \Wt * \Xt$ as the matrix multiplication $\Y = \W \X$, where $\Yt \in \REAL^{f \times p_1 \times p_2}$ and $\Y \in \REAL^{f \times p}$ correspond to the tensor and matrix representation of the output feature maps, respectively, and $p_1$, $p_2$ denote the spatial dimensions of $\Yt$. The equivalence of $\Yt$ and $\Y$ can be easily established via an appropriate reshaping operation since $p = p_1 p_2$.

\begin{figure}
    \centering
    \includegraphics[width=0.8\textwidth, height=0.37\textwidth]{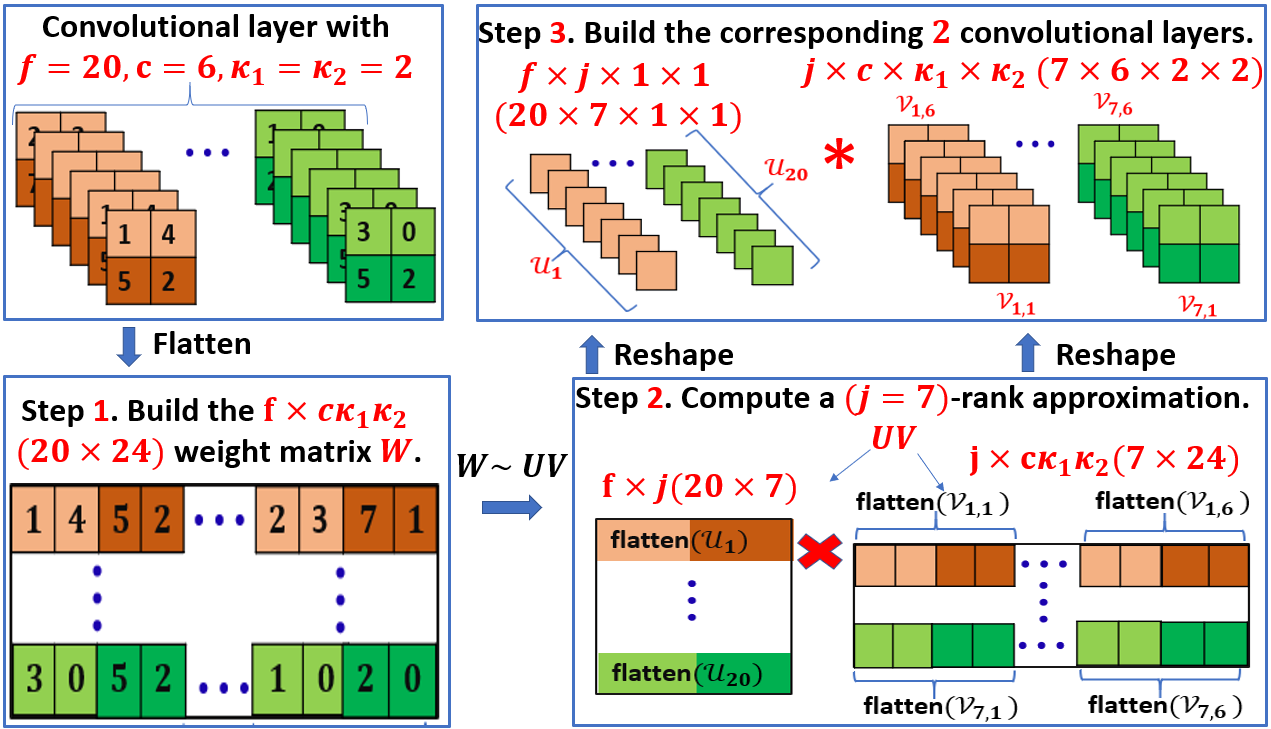}
    \caption{\small \textbf{Low-rank decomposition for convolutional layers via SVD.} The given convolution, c.f.~ Figure~\ref{fig:conv-matmul}, has $20$ filters, each of shape $6\times 2 \times 2$, resulting in a total of $480$ parameters. After extracting the corresponding weight matrix $W\in \REAL^{20\times 24}$ ($f \times c\kappa_1\kappa_2$), we compute its $(j=7)$-rank decomposition to obtain the pair of matrices $U\in \REAL^{20\times 7}$ ($f\times j$) and $V\in \REAL^{7 \times 24}$ ($j\times c\kappa_1\kappa_2$). Those matrices are encoded back as a pair of convolutional layers, the first (corresponding to $V$) has $j=7$ filters, $c=6$ channels and a $2\times2$ ($\kappa_1\times \kappa_2$) kernel, whereas the second (corresponding to $U$) is a $1\times1$ convolution of $f=20$ filters, and $j=7$ channels. The resulting layers have $308$ parameters.} \label{fig:conv-svd}
\end{figure}

\paragraph{Efficient tensor decomposition via SVD.}
Equipped with the notion of correspondence between convolution and matrix multiplication our goal is to decompose the layer via its matrix operator $W\in \REAL^{f \times c\kappa_1\kappa_2}$.
To this end, we compute the $j$-rank approximation of $W$ using SVD and factor it into a pair of smaller matrices $U\in \REAL^{f\times j}$ and $V\in \REAL^{j \times c\kappa_1\kappa_2 }$. More details on how to compute $U$ and $V$ are given in Section~\ref{sec-supp:method}.
We may then replace the original convolution, represented by $W$, by two smaller convolutions, represented by $V$ and $U$ for the first and second layer, respectively. Just like for the original layer, we can establish an equivalent convolution layer for both $U$ and $V$ as depicted in Figure~\ref{fig:conv-svd}.
To establish the equivalence we note that
(a) every row of the matrices $V$ and $U$ corresponds to a flattened filter of the respective convolution, and
(b) the number of channels in each layer is equal to the number of channels in its corresponding input tensor. 
Hence, the first layer, which is represented by $V \in \REAL^{j \times c\kappa_1\kappa_2 }$ 
has $j$ filters, c.f.~(a), each consisting of $c$ channels, c.f.~(b), with kernel size $\kappa_1\times \kappa_2$. 
The second layer corresponding to $U$ has $f$ filters, c.f.~(a), $j$ channels, c.f.~(b), and a $1 \times 1$ kernel and may be equivalently represented as the tensor $\Ut \in \REAL^{f \times j \times 1 \times 1}$. Note that the number of weights is reduced from $fc\kappa_1\kappa_2$ to  $j(f+c\kappa_1\kappa_2)$.

\subsection{Clustering Methods}
\label{sec-supp:method_clustering}
As previously explained in Section~\ref{sec:method_layer}, one can cluster the columns of the corresponding weight matrix $W$, instead of clustering the channels of the convolutional layer. 
Here, the channel clustering can be defined as constraint clustering of these columns, where columns which include entries that correspond to the same kernel (e.g., the first $4$ columns in $W$ from Figure~\ref{fig:conv-svd}) are guaranteed to be in the same cluster.

This generalization is easily adaptable to other clustering methods that generate a wider set of solutions, e.g., the known \textbf{$k$-means}. An intuitive choice for our case is \textbf{projective clustering} and its variants. The goal of projective clustering is to compute a set of $k$ subspaces, each of dimension $j$, that minimizes the sum of \emph{squared} distances from each column in $W$ to its \emph{closest} subspace from this set. 
Then, we can partition the columns of $W$ into $k$ subsets according to their nearest subspace from this set. This is a natural extension of SVD that solves this problem for the case of $k=1$. However, this problem is known to be NP-hard, hence expensive approximation algorithms are required to solve it, or alternatively, a local minimum solution can be obtained using the Expectation-Maximization method (EM)~\citep{dempster1977maximum}. 

A more robust version of the previous method is to minimize the sum of \emph{non-squared} distances from the points to the subspaces. This approach is known to be more robust toward outliers (``far away points''). Similarly to the original variant (the case of squared distance), we can use the EM method to obtain a ``good guess'' for a solution of this problem, however, the EM method requires an algorithm that solves the problem for the case of $k=1$, i.e., computing the subspace that minimizes the sum of (non-squared) distances from those columns (for the sum of squared distances case, SVD is this algorithm). Unfortunately, there is only approximation algorithms~\citep{tukan2020compressed,clarkson2015input} for this case, and the deterministic versions are expensive in terms of running time.

Furthermore, and probably more importantly, all of these  methods cannot be considered as structured compression since arbitrary clustering may require re-shuffling the input tensors which could lead to significant slow-downs during inference. For example, when compressing a fully-connected layer, the arbitrary clustering may result in nonconsecutive neurons from the first layer that are connected to the same neuron in the second layer, while neurons that are between them are not. Hence, these layers can only have a large, sparse instead of a small, dense representation.

To this end, we choose to use \textbf{channel slicing}, i.e., we simply split the channels of the convolutional layer into $k$ chunks, where each chunk has at most $c/k$ consecutive channels. Splitting the channels into consecutive subsets (without allowing any arbitrary clustering) and applying the factorization on each one results in a structurally compressed layer without the need of special software/hardware support. Furthermore, this approach is the fastest among all the others.  
Finally, while other approaches may give a better initial guess for a compressed network in theory, in practice this is not the case; see Figures~\ref{fig-supp:ablation}. This is due to the global compression framework (Section~\ref{sec:method_network}) which repeatedly utilizes the channel slicing. 

We see that in practice, our method improve upon state-of-the-art techniques and obtains smaller networks with higher accuracy without the use of those complicated approaches that may result in sparse but not smaller network; see Section~\ref{sec-supp:results_ablation}.

\subsection{Compressing via SVD}
\label{sec-supp:method_compviasvd}
As explained in Section~\ref{sec:method_layer}, the weights of a convolutional layer (or a dense layer) can be encoded into a matrix $W\in \REAL^{f \times ck_1k_2}$, where $f$ and $c$ are the number of filters and channels in the layer, respectively, and $k_1\times k_2$ is the size of the kernel. 
In order to compress the matrix $W$ (and thus its corresponding layer) we aim to factor it into a pair of smaller matrices $U\in  \REAL^{f \times j}$ and $V \in \REAL^{j \times  ck_1k_2}$, such that $UV$ approximates the original matrix operator $W$. 

\paragraph{How to compute the matrices $U$ and $V$?}
For simplicity, let $d=ck_1k_2$.
We factor the matrix $W \in \REAL^{f \times d}$ via SVD to obtain $W = \tilde{U} \tilde{\Sigma}\tilde{V}$, where $ \tilde{U}$ is an ${f\times f}$ orthogonal matrix, $\tilde{\Sigma}$ is an ${\displaystyle f\times d}$ rectangular diagonal matrix with non-negative real numbers on the diagonal, and $\tilde{V}$ is an ${\displaystyle d \times d}$ orthogonal matrix. 

To compute a $j$-rank approximation of $W$, we can simply define the following $3$ matrices: $U\in \REAL^{f\times j}$, $\overline{V}\in \REAL^{j\times d}$, and $\Sigma \in \REAL^{j\times j}$, where $U$ is constructed by taking the first $j$ columns of $\tilde{U}$, $\overline{V}$  by taking the first $j$ rows of $\tilde{V}$, and $\Sigma$ is a diagonal matrix such that the $j$ entries on its diagonal are equal to the first $j$ entries on the diagonal of $\tilde{\Sigma}$. 
Now we have that ${U} {\Sigma}\overline{V}$ is the $j$-rank approximation of $W$. 

Finally, we can define $V=\Sigma\overline{V}$ to obtain a factorization of the $j$-rank approximation of $W$ as $UV$.

\subsection{Efficient Implementation of ALDS (Algorithm~\ref{alg:budget_allocation})} 
Our algorithm that is suggested in Section~\ref{sec:method_network} aims at minimizing the maximum relative error $\eps \coloneqq \max_{\ell \in [L]} \eps^\ell$ across the $L$ layers of the network as a proxy for the true cost, where $\eps^\ell$ is the theoretical maximum relative error in the $\ell^\text{th}$: 
$$\eps^\ell \coloneqq \frac{\norm{\What^\ell - \W^\ell}}{\norm{\W^\ell}}.$$

Through Algorithm~\ref{alg:budget_allocation}, for every $
\ell\in [L]$ we need to repeatedly compute $\eps^\ell$ as a function of $j^\ell$ and $k^\ell$. 
At Line~\ref{lin:opt_ranks}, we are given a guess for the optimal values of $k^1,\dots, k^L$, and our goal is to compute the values $j^1,\dots, j^L$ such that the resulting errors $\eps^1, \dots,\eps^L$ are (approximately) equal in order to minimize the maximum error $\max_{\ell \in [L]} \eps^\ell$ while achieving the desired global compression ratio. 
To this end, we guess a value for $\eps$ and for given $k^1, \ldots, k^L$ pick the corresponding $j^1, \ldots, j^L$ such that $\eps$ constitutes a tight upper bound for the relative error in each layer. Based on the now resulting budget (and consequently compression ratio) we can now improve our guess of $\eps$, e.g., via binary search or other types of root finding algorithms, until we convergence to a value of $\eps$ that corresponds to our desired overall compression ratio.

Subsequently, for each layer we are given specific values of $k^\ell$ and $j^\ell$, which implies that we are given a budget $b^\ell$ for every layer $\ell\in[L]$.
Subsequently, we re-assign the number of subspaces $k^\ell$ and their ranks $j^\ell$ for each layer by iterating through the finite set of possible values for $k^\ell$ (Line~\ref{lin:opt_k}) and choosing the combination of $j^\ell$, $k^\ell$ that minimizes the relative error for the current layer budget $b^\ell$ (computed in Line~\ref{lin:layer_budget}). 

We then iteratively repeat both steps until convergence (Lines~\ref{lin:conv_start}-\ref{lin:conv_end}). 

Hence, instead of computing the cost of each layer at each step, we can save a lookup table that stores the errors $\eps^\ell$ for the possible values of $k^\ell$ and $j^\ell$ of each layer. For every layer $\ell \in [L]$, we iterate over the finite set of values of $k^\ell$, and we split the matrix $W^\ell$ to $k^\ell$ matrices (according to the channel slicing approach that is explained in Section~\ref{sec:method_layer}), then we compute the SVD factorization of each matrix from these $k^\ell$ matrices, and finally, compute $\eps^\ell$ that corresponds to a specific $j^\ell$ ($k^\ell$ is already given) in $O(fd)$ time, where $f$ is the number of rows in the weight matrix that corresponds to the $\ell$th layer and $d$ is the number of columns. 

Furthermore, instead of computing each option of $\eps^\ell$ in $O(fd)$ time, we use the upper bound derived in Theorem~\ref{thm:rel_error} to compute it in $O(k)$ time and saving it in the lookup table. Specifically, we can express the relative error as a function of the rank and we thus only need to solve the underlying SVD for each layer once for each value of $k^\ell$. Without Theorem~\ref{thm:rel_error} we would need to compute the relative error (operator norm) for each pair $j^\ell, k^\ell$ separately. This would in turn result in a significant slowdown of the runtime of Algorithm~\ref{alg:budget_allocation}. 
Hence, the combined use of a look-up table and the application of Theorem~\ref{thm:rel_error} ensures a more efficient implementation of Algorithm~\ref{alg:budget_allocation}.

\subsection{Additional Discussion of ALDS (Algorithm~\ref{alg:budget_allocation})}

Below, we include additional details and clarification regarding Algorithm~\ref{alg:budget_allocation}.

\subsubsection*{Overview}
At a high-level, Algorithm~\ref{alg:budget_allocation} aims to find a local optimum for the optimization procedure described in Equation~\eqref{eq:budget_allocation}. We hereby iteratively optimize for $k^1, \ldots, k^L$ and $j^1, \ldots, j^L$. The step where we fix the set of $k$’s and optimize for the set of $j$’s is Line~\ref{lin:opt_ranks}, whereas in Line~\ref{lin:opt_k} we fix the layer budget and optimize for the set of $k$’s. At each step the objective is minimized. Thus for a fixed seed, ALDS converges to a local optimum of~\eqref{eq:budget_allocation}. We then repeat the entire procedure multiple times with different random seeds to improve the quality of the local optimum. 

\subsubsection*{Line~\ref{lin:opt_ranks}: $\textsc{OptimalRanks}(\mathbf{CR}, k^1, \ldots, k^L)$}
At this step we are given a guess for the optimal values of $k^1,\dots, k^L$, and our goal is to compute the values $j^1,\dots, j^L$ that minimize the objective function described in Equation~\eqref{eq:budget_allocation}, i.e., the maximum error $\max_{\ell \in [L]} \eps^\ell$, while achieving the desired global compression ratio $\mathbf{CR}$.

To find the optimal solution, we note the following. Recall that $k$’s are fixed.

\begin{enumerate}
    \item 
    \textbf{The maximum error is minimized exactly when all errors are equal.} To see that this is indeed the case we can proceed by contradiction. Suppose we found an optimal solution where all errors are not equal. Then we could use some of our compression budget to add more parameters to the layer with the maximum error while removing the same amount of parameters from the layer with minimum error. Since adding more parameters improves the error we just lowered the maximum error by adding more parameters to the layer with the maximum error. Hence, this leads to a contradiction proving our initial statement.
    \item
        \textbf{A given constant error across layers corresponds to a fixed compression ratio.} This should be very straightforward to see. Specifically, for a given layer error we can find the corresponding rank and the rank implies how many parameters the compressed layer will have. This then implies a fixed compression ratio. Moreover, note that this relation is monotonic.
\end{enumerate}

Both (1.) and (2.) together imply that we can use a binary search or some other root finding algorithm to determine the corresponding constant error for a desired compression ratio $\textsc{OptimalRanks}$. The solution of our binary search will then be the corresponding set of $j$’s (ranks) for each layer that minimizes the maximum error for a desired compression ratio and given set of $k$’s (recall that we optimize for $k$’s separately).

\subsubsection*{Line~\ref{lin:opt_k}: $\textsc{OptimalSubspaces}(b^\ell)$}

This step is fairly straightforward to follow. First, we note that for this step we proceed on a per-layer basis. Here, for each layer we are given specific values of $k^\ell$ and $j^\ell$, which implies that we are given a budget $b^\ell$ for every layer $\ell\in[L]$.
Subsequently, we re-assign the number of subspaces $k^\ell$ and their ranks $j^\ell$ for each layer $\ell$ as follow: 
We iterate through the finite set of possible values for $k^\ell$, for every such value $k^\ell$ we pick its corresponding $j^\ell$ such that the total size (number of parameters) of this layer is (approximately) the given budget $b^\ell$. 
Now, for every pair of candidates $k^\ell$ and $j^\ell$ we compute the relative error on this layer that is caused after compression with respect to these values. 
Finally, we choose the combination of $j^\ell$, $k^\ell$ that minimizes the relative error for the current layer budget $b^\ell$. We then discard the values found for $j^1, \ldots, j^L$ and re-optimize them in the next iteration of $\textsc{OptimalRanks}$.

Note that for $\textsc{OptimalSubspaces}$ there is no monotonic relation between the value of $k^\ell$ and the corresponding error like there is between the value of $j^\ell$ and the error. Hence, we proceed on a per-layer basis where we keep the per-layer budget constant during $\textsc{OptimalSubspaces}$ as described above.

\subsubsection*{Optimality}

From the details of the two steps, it should be very clear that the cost is decreasing at each step in the optimization procedure and we can thus conclude that for each random seed Algorithm~\ref{alg:budget_allocation} converges to a local optimum (at which point the cost will be non-increasing).

\subsubsection*{Even More Remarks}

Note that above for $\textsc{OptimalRanks}$ we assumed that the errors ($\varepsilon^\ell$’s) are continuous but they are actually discrete given that they are a function of the rank which is discrete. However, as long as we can ensure that the objective decreases at every iteration we can still reach a local minimum.

Alternatively, we can solve the continuous relaxation of the above problem and use a randomized rounding\footnote{\url{https://en.wikipedia.org/wiki/Randomized_rounding}} approach to get an approximately optimal solution.

In practice, however, we found that it is not necessary to add this additional complication step since it is sufficient that the cost objective decreases at every time step and we cannot hope to obtain a global optimum anyway (we can only approximate it with repeating the optimization procedure with multiple random seeds, which we do, see Algorithm~\ref{alg:budget_allocation}).

\subsection{Extensions of ALDS}
\label{sec-supp:method_extension}
As mentioned in Section~\ref{sec:method_alds} ALDS can be readily extended to any desired \emph{set} of low-rank compression techniques. Specifically, we can replace the local step of Line~\ref{lin:opt_k} by a search over different methods, e.g., Tucker decomposition, PCA, or other SVD compression schemes, and return the best method for a given budget. In general, we may combine ALDS with any low-rank compression as long as we can efficiently evaluate the per-layer error of the compression scheme. Note that this essentially equips us with a framework to automatically choose the per-layer decomposition technique fully automatically.

To this end, we test an extension of ALDS where in addition to searching over multiple values of $k^\ell$ we simultaneously search over various flattening schemes to convert a convolutional tensor to a matrix before applying SVD.

As before, let $\Wt \in \REAL^{f \times c \times \kappa_1 \times \kappa_2}$ denote the weight tensor for a convolutional layer with $f$ filters, $c$ input channels, and a $\kappa_1 \times \kappa_2$ kernel. Moreover, let $j$ denote the desired rank of the decomposition.
We consider the following schemes to automatically search over: 

\begin{itemize}[itemsep=5pt, leftmargin=25pt, labelwidth=10pt, topsep=5pt, partopsep=0pt]
    \item 
    \textsc{Scheme 0}: flatten the tensor to a matrix of shape $f \times c \kappa_1 \kappa_2$. The decomposed layers correspond to a $j \times c \times \kappa_1 \times \kappa_2$-convolution followed by a $f \times j \times 1 \times 1$-convolution. This is the same scheme as used in ALDS.
    \item 
    \textsc{Scheme 1}: flatten the tensor to a matrix of shape $f \kappa_1 \times c \kappa_2$. The decomposed layers correspond to a $j \times c \times 1 \times \kappa_2$-convolution followed by a $f \times j \times \kappa_1 \times 1$-convolution.
    \item
    \textsc{Scheme 2}: flatten the tensor to a matrix of shape $f \kappa_2 \times c \kappa_1$. The decomposed layers correspond to a $j \times c \times \kappa_1 \times 1$-convolution followed by a $f \times j \times 1 \times \kappa_2$-convolution.
    \item
    \textsc{Scheme 3:}
    flatten the tensor to a matrix of shape $f \kappa_1 \kappa_2 \times c$. The decomposed layers correspond to a $j \times c \times 1 \times 1$-convolution followed by a $f \times j \times \kappa_1 \times \kappa_2$-convolution.
\end{itemize}

We denote this method by ALDS+ and provide preliminary results in Section~\ref{sec-supp:results_extension}. We note that since ALDS+ is a generalization of ALDS its performance is at least as good as the original ALDS. Moreover, our preliminary results actually suggest that the extension clearly improves upon the empirical performance of ALDS.
\section{Experimental Setup and Hyperparameters}
\label{sec-supp:setup}
Our experimental evaluations are based on a variety of network architectures, data sets, and compression pipelines. In the following, we provide all necessary hyperparameters to reproduce our experiments for each of the datasets and respective network architectures. 

All networks were trained, compressed, and evaluated on a compute cluster with NVIDIA Titan RTX and NVIDIA RTX 2080Ti GPUs. The experiments were conducted with PyTorch 1.7 and our code is fully open-sourced~\footnote{Code repository: \url{https://github.com/lucaslie/torchprune}}.

All networks are trained according to the hyperparameters outlined in the respective original papers. During retraining, which is described in Section~\ref{sec-supp:setup-pipeline}, we reuse the same hyperparameters.

Moreover, each experiment is repeated 3 times and we report mean and mean, standard deviation in the tables and figures, respectively. 

For each data set, we use the publicly available development set as test set and use a 90\%/5\%/5\% split on the train set to obtain a separate train and \emph{two} validation sets. One validation set is used for data-dependent compression methods, e.g., PCA~\citep{Zhang2015a}; the other set is used for early stopping during training.

\subsection{Experimental Setup for CIFAR10}
\label{sec-supp:setup-cifar10}
All relevant hyperparameters are outlined in Table~\ref{tab-supp:cifar_hyperparameters}.
For each of the networks we use the training hyperparameters outlined in the respective original papers, i.e., as described by \citet{he2016deep}, \citet{simonyan2014very}, \citet{huang2017densely}, and \citet{zagoruyko2016wide} for ResNets, VGGs, DenseNets, and WideResNets (WRN), respectively. 

We add a warmup period in the beginning where we linearly scale up the learning rate from 0 to the nominal learning rate to ensure proper training performance in distributed training settings~\citep{goyal2017accurate}. 

During training we use the standard data augmentation strategy for CIFAR: (1) zero padding from 32x32 to 36x36; (2) random crop to 32x32; (3) random horizontal flip; (4) channel-wise normalization. During inference only the normalization (4) is applied.

The compression ratios are chosen according to a geometric sequence with the common ratio denoted by $\alpha$ in Table~\ref{tab-supp:cifar_hyperparameters}, i.e., the compression ratio for iteration $i$ is determined by $1 - \alpha^{i}$. 
The compression parameter $n_\text{seed}$ denotes the number of seeds used to initialize Algorithm~\ref{alg:budget_allocation} for compressing with PP. 

\begin{table}
\centering
\caption{
The experimental hyperparameters for training, compression, and retraining for the tested \textbf{CIFAR10} network architectures.
``LR'' and ``LR decay'' hereby denote the learning and the (multiplicative) learning rate decay, respectively, that is deployed at the epochs as specified. ``$\{x, \ldots\}$'' indicates that the learning rate is decayed every $x$ epochs.}
\label{tab-supp:cifar_hyperparameters}
\begin{adjustbox}{width=1.0\textwidth}
\begin{tabular}{|c|c|l||c|c|c|c|}
\hline
\multirow{14}{*}{\rotatebox{90}{CIFAR10}}
& \multicolumn{2}{c||}{Hyperparameters}
& VGG16 & Resnet20 & DenseNet22 & WRN-16-8 \\ \cline{2-7}
& \multirow{9}{*}{(Re-)Training} 
& Test accuracy (\%) &  92.81         & 91.4
                     &  89.90         & 95.19 \\
& & Loss          & cross-entropy  & cross-entropy
                  & cross-entropy  & cross-entropy \\
& & Optimizer     & SGD            & SGD 
                  & SGD            & SGD  \\
& & Epochs        & 300            & 182
                  & 300            & 200  \\
& & Warm-up       & 10             & 5    
                  & 10             & 5    \\
& & Batch size    & 256            & 128
                  & 64             & 128   \\
& & LR            & 0.05           & 0.1
                  & 0.1            & 0.1  \\
& & LR decay      & 0.5@\{30, \ldots\} & 0.1@\{91, 136\}
                  & 0.1@\{150, 225\}   & 0.2@\{60, \ldots\}\\
& & Momentum      & 0.9            & 0.9
                  & 0.9            & 0.9  \\
& & Nesterov      & \xmark         & \xmark      
                  & $\checkmark$   & $\checkmark$ \\
& & Weight decay  & 5.0e-4         & 1.0e-4 
                  & 1.0e-4         & 5.0e-4   \\ \cline{2-7}
& \multirow{2}{*}{Compression}
&   $\alpha$      & 0.80           & 0.80
                  & 0.80           & 0.80 \\ 
&& $n_\text{seed}$& 15             & 15
                  & 15             & 15  \\ \hline
\end{tabular}
\end{adjustbox}
\end{table}

\subsection{Experimental Setup for ImageNet}
\label{sec-supp:setup-imagenet}
We report the relevant hyperparameters in Table~\ref{tab-supp:imagenet_hyperparameters}. For ImageNet we consider the networks architectures Resnet18~\citep{he2016deep}, AlexNet~\citep{Alex2012}, and MobileNetV2~\citep{sandler2018mobilenetv2}. 

During training we use the following data augmentation: (1) randomly resize and crop to 224x224; (2) random horizontal flip; (3) channel-wise normalization. During inference, we use a center crop to 224x224 before (3) is applied.

Note that for MobileNetV2 we deploy a lower initial learning rate during retraining. Otherwise, all hyperparameters remain the same during retraining.

\begin{table}
\centering
\caption{
The experimental hyperparameters for training, compression, and retraining for the tested \textbf{ImageNet} network architectures.
``LR'' and ``LR decay'' hereby denote the learning and the (multiplicative) learning rate decay, respectively, that is deployed at the epochs as specified. ``$\{x, \ldots\}$'' indicates that the learning rate is decayed every $x$ epochs.}
\label{tab-supp:imagenet_hyperparameters}
\begin{adjustbox}{width=1.0\textwidth}
\begin{tabular}{|c|c|l||c|c|c|}
\hline
\multirow{15}{*}{\rotatebox{90}{ImageNet}}
& \multicolumn{2}{c||}{Hyperparameters}
& ResNet18 & AlexNet & MobileNetV2 \\ \cline{2-6}
& \multirow{9}{*}{(Re-)Training} 
&   Top 1 Test accuracy (\%) &  69.64 & 57.30 & 71.85  \\
& & Top 5 Test accuracy (\%) &  88.98 & 80.20 & 90.33  \\
& & Loss          & cross-entropy  & cross-entropy & cross-entropy \\
& & Optimizer     & SGD            & SGD & RMSprop \\
& & Epochs        & 90             & 90  & 300 \\
& & Warm-up       & 5              & 5   & 0 \\
& & Batch size    & 256            & 256 & 768 \\
& & LR            & 0.1            & 0.1 & 0.045 (1e-4) \\
& & LR decay      & 0.1@\{30, 60, 80\} &  0.1@\{30, 60, 80\} & 0.98 per step \\
& & Momentum      & 0.9            &  0.9 & 0.9 \\
& & Nesterov      & \xmark         & \xmark & \xmark \\
& & Weight decay  & 1.0e-4         & 1.0e-4 & 4.0e-5 \\ \cline{2-6}
& \multirow{2}{*}{Compression}
&   $\alpha$      & 0.80           & 0.80 & 0.80 \\ 
&& $n_\text{seed}$& 15             & 15  & 15 \\ \hline
\end{tabular}
\end{adjustbox}
\end{table}

\subsection{Experimental Setup for Pascal VOC}
\label{sec-supp:setup-voc}
In addition to CIFAR and ImageNet, we also consider the segmentation task from Pascal VOC 2012~\cite{everingham2015pascal}. We augment the nominal data training data using the extra labels as provided by~\citet{hariharan2011semantic}. As network architecture we consider a DeeplabV3~\cite{chen2017rethinking} with ResNet50 backbone pre-trained on ImageNet. 

During training we use the following data augmentation pipeline: (1) randomly resize (256x256 to 1024x1024) and crop to 513x513; (2) random horizontal flip; (3) channel-wise normalization. During inference, we resize to 513x513 exactly before the normalization (3) is applied. 

We report both intersection-over-union (IoU) and Top1 test accuracy for each of the compressed and uncompressed networks. The experimental hyperparameters are summarized in Table~\ref{tab-supp:voc_hyperparameters}.

\begin{table}
\centering
\caption{
The experimental hyperparameters for training, compression, and retraining for the tested \textbf{VOC} network architecture.
``LR'' and ``LR decay'' hereby denote the learning and the learning rate decay, respectively. Note that the learning rate is polynomially decayed after each step.}
\label{tab-supp:voc_hyperparameters}
\begin{tabular}{|c|c|l||c|}
\hline
\multirow{15}{*}{\rotatebox{90}{Pascal VOC 2012 -- Segmentation}}
& \multicolumn{2}{c||}{Hyperparameters}
& DeeplabV3-ResNet50 \\ \cline{2-4}
& \multirow{9}{*}{(Re-)Training} 
&   IoU Test accuracy (\%) &  69.84 \\
& & Top 1 Test accuracy (\%) &  94.25 \\
& & Loss          & cross-entropy  \\
& & Optimizer     & SGD            \\
& & Epochs        & 45             \\
& & Warm-up       & 0              \\
& & Batch size    & 32            \\
& & LR            & 0.02            \\
& & LR decay      & $\text{(1 - ``step''/``total steps'')} ^ \text{0.9}$ \\
& & Momentum      & 0.9            \\
& & Nesterov      & \xmark         \\
& & Weight decay  & 1.0e-4         \\ \cline{2-4}
& \multirow{2}{*}{Compression}
&   $\alpha$      & 0.80           \\ 
&& $n_\text{seed}$& 15             \\ \hline
\end{tabular}
\end{table}

\subsection{Baseline Methods}
\label{sec-supp:setup-comparisons}
We implement and compare against the following compression methods for our baseline experiments: 

\begin{enumerate}[itemsep=5pt, leftmargin=25pt, labelwidth=10pt, topsep=5pt, partopsep=0pt]
    \item[\large 1.]
    \textsc{PCA}~\citep{Zhang2015a} decomposes each layer based on principle component analysis of the pre-activation (output of linear layer). We implement the symmetric, linear version of their method. The per-layer compression ratio is based on the greedy solution for minimizing the product of the per-layer energy, where the energy is defined as the sum of singular values in the compressed layer, see Equation (14) of~\citet{Zhang2015a}.
    \item[\large 2.]
    \textsc{SVD-Energy}~\citep{alvarez2017compression,Wen2017} decomposes each layer via matrix folding akin to our SVD-based decomposition. The per-layer compression ratio is found by keeping the relative energy reduction constant across layers, where energy is defined as the sum of squared singular values. 
    \item[\large 3.]
    \textsc{SVD}~\citep{Denton2014} decomposes each layer via matrix folding akin to our SVD-based decomposition. However, we hereby fix $k^\ell=1$ for all layers $\ell \in [L]$ in order to provide a nominal comparison akin of ``standard'' tensor decomposition. The per-layer compression ratio is kept constant across all layers.
    \item[\large 4.]
    \textsc{L-Rank}~\citep{idelbayev2020low} decomposes each layer via matrix folding akin to our SVD-based decomposition. The per-layer compression is determined by minimizing a joint cost objective of the energy and the computational cost of each layer, see Equation (5) of~\citet{idelbayev2020low} for details.
    \item[\large 5.] 
    \textsc{FT}~\cite{li2016pruning} prunes the filters (or neurons) in each layer with the lowest element-wise $\ell_2$-norm. The per-layer compression ratio is set manually (constant in our implementation).
    \item[\large 6.]
    \textsc{PFP}~\cite{liebenwein2020provable} prunes the channels with the lowest sensitivity, where the data-dependent sensitivities are based on a provable notion of channel pruning. The per-layer prune ratio is determined based on the associated theoretical error guarantees.
\end{enumerate}

\subsection{Compress-Retrain Pipeline}
\label{sec-supp:setup-pipeline}

Recall that our baseline experiments are based on the following unified compress-retrain pipeline across all compression methods: 

\begin{enumerate}[itemsep=5pt, leftmargin=25pt, labelwidth=10pt, topsep=5pt, partopsep=0pt]
    \item[\large 1.] 
    \textsc{Train} for $e$ epochs according to the standard training schedule for the respective network. 
    \item[\large 2.]  
    \textsc{Compress} the network according to the chosen method.
    \item[\large 3.] 
    \textsc{Retrain} the network for the desired amount of $r$ epochs using the original training hyperparameters from the epochs in the range $[e-r,e]$. 
    \item[\large 4.]
    \textsc{Iteratively} repeat  1.-3. after projecting the decomposed layers back (\textbf{optional}).
\end{enumerate}

In addition, we also consider experiments in the iterative learning rate rewinding setting, where steps 2 and 3 are repeated iteratively (optional step 4). 

While various papers combine their compression methods with different retrain schedules we unify the compress-retrain pipeline across all tested methods for our baseline experiments to ensure that results are comparable. Note that the implemented compress-retrain pipeline as originally introduced by~\citet{renda2020comparing} has been shown to yield consistently good compression results across various compression/pruning setups (unstructured, structured) and tasks (computer vision, NLP). Hence, we choose to concentrate on that particular pipeline. 

\FloatBarrier
\section{Additional Experimental Results}
\label{sec-supp:results}
In this section, we provide additional results of our experimental evaluations.

\subsection{Complete Tables for One-shot Compression Experiments from Section~\ref{sec:experiments_oneshot}}

\begin{table}[htb]
\begingroup
\setlength{\tabcolsep}{3.05pt} 
\renewcommand{\arraystretch}{1.2} 
\centering
\caption{
The maximal compression ratio for which the drop in test accuracy is at most some pre-specified $\delta$ on CIFAR10. The table reports compression ratio in terms of parameters and FLOPs, denoted by CR-P and CR-F, respectively.
When the desired $\delta$ was not achieved for any compression ratio in the range the fields are left blank. The top values achieved for CR-P and CR-F are bolded.}
\label{tab-supp:cifar_retrain}
\begin{adjustbox}{width=1\textwidth}
\begin{tabular}{|c|c|c||ccc|ccc|ccc|ccc|ccc|}
\hline
\multirow{30}{*}{\rotatebox{90}{CIFAR10}}
& \multirow{2}{*}{Model}
& \multirow{2}{*}{\shortstack{Prune \\ Method}}
& \multicolumn{3}{c|}{$\delta=0.0\%$}
& \multicolumn{3}{c|}{$\delta=0.5\%$}
& \multicolumn{3}{c|}{$\delta=1.0\%$}
& \multicolumn{3}{c|}{$\delta=2.0\%$}
& \multicolumn{3}{c|}{$\delta=3.0\%$} \\
& &
& Top1 Acc. & CR-P & CR-F
& Top1 Acc. & CR-P & CR-F
& Top1 Acc. & CR-P & CR-F
& Top1 Acc. & CR-P & CR-F
& Top1 Acc. & CR-P & CR-F \\ \cline{2-18}
& \multirow{7}{*}{\shortstack{ResNet20 \\ \\ Top1: 91.39}}
& ALDS
& +0.09 & \textbf{64.58} & \textbf{55.95}
& -0.47 & \textbf{74.91} & \textbf{67.86}
& -0.68 & \textbf{79.01} & \textbf{71.59}
& -1.88 & \textbf{87.68} & \textbf{83.23}
& -2.59 & \textbf{89.65} & \textbf{85.32} \\
& & PCA
& +0.16 & 39.98 & 38.64
& -0.11 & 49.88 & 48.67
& -0.58 & 58.04 & 57.21
& -1.41 & 70.54 & 70.78
& -2.11 & 75.23 & 76.01 \\
& & SVD-Energy
& +0.14 & 40.22 & 39.38
& -0.21 & 49.88 & 49.08
& -0.83 & 57.95 & 57.15
& -1.52 & 64.76 & 64.10
& -2.17 & 70.47 & 70.01 \\
& & SVD
& +0.24 & 14.36 & 15.34
& -0.29 & 39.81 & 38.95
& -0.90 & 49.19 & 50.21
& -1.08 & 57.47 & 57.80
& -2.88 & 70.14 & 71.31 \\
& & L-Rank
& +0.14 & 15.00 & 29.08
& -0.44 & 28.71 & 54.89
& -0.44 & 28.71 & 54.89
& -1.56 & 49.87 & 72.57
& -2.82 & 64.81 & 80.80 \\
& & FT
& +0.15 & 15.29 & 16.66
& -0.32 & 39.69 & 39.57
& -0.75 & 57.77 & 55.85
& -1.88 & 74.89 & 71.76
& -2.71 & 79.29 & 76.74 \\
& & PFP
& +0.12 & 28.74 & 20.56
& -0.28 & 40.28 & 30.06
& -0.85 & 58.26 & 46.94
& -1.56 & 70.49 & 59.78
& -2.57 & 79.28 & 69.27 \\
\cline{2-18}
& \multirow{7}{*}{\shortstack{VGG16 \\ \\ Top1: 92.78}}
& ALDS
& +0.29 & \textbf{94.89} & \textbf{83.94}
& -0.11 & \textbf{95.77} & \textbf{86.23}
& -0.52 & \textbf{97.01} & \textbf{88.95}
& -0.52 & \textbf{97.03} & 88.95
& -0.52 & \textbf{97.03} & 88.95 \\
& & PCA
& +0.47 & 87.74 & 81.05
& -0.02 & 89.72 & 85.84
& -0.02 & 89.72 & 85.84
& -1.12 & 91.37 & 89.57
& -1.12 & 91.37 & 89.57 \\
& & SVD-Energy
& +0.35 & 79.21 & 78.70
& -0.08 & 82.57 & 81.32
& -0.83 & 87.74 & 85.36
& -1.22 & 89.71 & 87.13
& -2.08 & 91.37 & 88.58 \\
& & SVD
& +0.29 & 70.35 & 70.13
& +0.29 & 70.35 & 70.13
& -0.74 & 75.18 & 75.13
& -1.58 & 82.58 & 82.39
& -1.58 & 82.58 & 82.39 \\
& & L-Rank
& +0.35 & 82.56 & 69.67
& -0.35 & 85.38 & 75.86
& -0.35 & 85.38 & 75.86
& -0.35 & 85.38 & 75.86
& -0.35 & 85.38 & 75.86 \\
& & FT
& +0.17 & 64.81 & 62.16
& -0.47 & 79.13 & 78.44
& -0.87 & 82.61 & 82.41
& -1.95 & 89.69 & 89.91
& -2.66 & 91.35 & \textbf{91.68} \\
& & PFP
& +0.16 & 89.73 & 74.61
& -0.47 & 94.87 & 84.76
& -0.96 & 96.40 & 88.38
& -1.33 & 97.02 & \textbf{90.25}
& -1.33 & 97.02 & 90.25 \\
\cline{2-18}
& \multirow{7}{*}{\shortstack{DenseNet22 \\ \\ Top1: 89.88}}
& ALDS
& +0.17 & \textbf{48.85} & \textbf{51.90}
& -0.32 & \textbf{56.84} & \textbf{61.98}
& -0.54 & \textbf{63.83} & \textbf{69.68}
& -1.87 & \textbf{69.67} & \textbf{74.48}
& -1.87 & \textbf{69.67} & \textbf{74.48} \\
& & PCA
& +0.20 & 14.67 & 34.55
& +0.20 & 14.67 & 34.55
& -0.73 & 28.83 & 57.02
& -0.73 & 28.83 & 57.02
& -2.75 & 40.51 & 70.03 \\
& & SVD-Energy
&   &   &  
& -0.29 & 15.16 & 19.34
& -0.29 & 15.16 & 19.34
& -1.28 & 28.62 & 33.26
& -2.21 & 40.20 & 44.72 \\
& & SVD
& +0.13 & 15.00 & 15.33
& +0.13 & 15.00 & 15.33
& -0.87 & 26.73 & 27.41
& -0.87 & 26.73 & 27.41
& -2.51 & 37.99 & 39.25 \\
& & L-Rank
& +0.26 & 14.98 & 35.21
& +0.26 & 14.98 & 35.21
& -0.90 & 28.67 & 63.55
& -1.82 & 40.33 & 73.45
& -1.82 & 40.33 & 73.45 \\
& & FT
& +0.15 & 15.49 & 16.70
& -0.24 & 28.33 & 29.50
& -0.24 & 28.33 & 29.50
& -1.46 & 51.10 & 51.03
& -2.40 & 64.12 & 63.09 \\
& & PFP
& +0.00 & 28.68 & 32.60
& -0.44 & 40.24 & 43.37
& -0.70 & 49.67 & 51.94
& -1.36 & 58.20 & 58.21
& -2.43 & 65.17 & 64.50 \\
\cline{2-18}
& \multirow{7}{*}{\shortstack{WRN16-8 \\ \\ Top1: 95.21}}
& ALDS
& +0.05 & 28.67 & 13.00
& -0.42 & \textbf{87.77} & 79.90
& -0.88 & \textbf{92.75} & 87.39
& -1.53 & \textbf{95.69} & 92.50
& -2.23 & \textbf{97.01} & \textbf{95.51} \\
& & PCA
& +0.14 & 15.00 & 7.98
& -0.49 & 85.33 & \textbf{83.45}
& -0.96 & 91.33 & \textbf{90.23}
& -1.76 & 93.90 & \textbf{93.15}
& -2.45 & 94.87 & 94.30 \\
& & SVD-Energy
& +0.29 & 15.01 & 6.92
& -0.41 & 64.75 & 60.94
& -0.81 & 85.38 & 83.52
& -1.90 & 91.38 & 90.04
& -2.46 & 92.77 & 91.58 \\
& & SVD
&   &   &  
&   &   &  
& -0.96 & 40.20 & 39.97
& -1.63 & 70.48 & 70.49
& -1.63 & 70.48 & 70.49 \\
& & L-Rank
& +0.25 & 14.99 & 6.79
& -0.45 & 49.86 & 58.00
& -0.88 & 75.20 & 82.26
& -1.70 & 87.73 & 92.03
& -2.18 & 89.72 & 93.51 \\
& & FT
& +0.03 & \textbf{64.54} & \textbf{61.53}
& -0.32 & 82.33 & 75.97
& -0.95 & 89.70 & 83.52
& -1.78 & 94.91 & 90.82
& -2.86 & 96.42 & 93.33 \\
& & PFP
& +0.05 & 57.92 & 54.74
& -0.44 & 85.33 & 80.68
& -0.77 & 89.71 & 85.16
& -1.69 & 95.65 & 92.60
& -2.40 & 96.96 & 94.36 \\
\hline
\end{tabular}

\end{adjustbox}
\endgroup
\end{table}

\begin{table}[htb]
\begingroup
\setlength{\tabcolsep}{3.05pt} 
\renewcommand{\arraystretch}{1.3} 
\centering
\caption{
The maximal compression ratio for which the drop in test accuracy is at most $\delta=1.0\%$ for ResNet20 (CIFAR10) for various amounts of retraining (as indicated).
The table reports compression ratio in terms of parameters and FLOPs, denoted by CR-P and CR-F, respectively.
When the desired $\delta$ was not achieved for any compression ratio in the range the fields are left blank. The top values achieved for CR-P and CR-F are bolded.
}
\label{tab-supp:cifar_retrainsweep}
\begin{adjustbox}{width=1\textwidth}
\begin{tabular}{|c|c|c||ccc|ccc|ccc|ccc|ccc|ccc|}
\hline
\multirow{9}{*}{\rotatebox{90}{CIFAR10}}
& \multirow{2}{*}{Model}
& \multirow{2}{*}{\shortstack{Prune \\ Method}}
& \multicolumn{3}{c|}{$r=0\%\,e$}
& \multicolumn{3}{c|}{$r=5\%\,e$}
& \multicolumn{3}{c|}{$r=10\%\,e$}
& \multicolumn{3}{c|}{$r=25\%\,e$}
& \multicolumn{3}{c|}{$r=50\%\,e$}
& \multicolumn{3}{c|}{$r=100\%\,e$} \\
& &
& Top1 Acc. & CR-P & CR-F
& Top1 Acc. & CR-P & CR-F
& Top1 Acc. & CR-P & CR-F
& Top1 Acc. & CR-P & CR-F
& Top1 Acc. & CR-P & CR-F
& Top1 Acc. & CR-P & CR-F \\ \cline{2-21}
& \multirow{7}{*}{\shortstack{ResNet20 \\ \\ Top1: 91.39}}
& ALDS
& -0.13 & \textbf{14.82} & \textbf{7.03}
& -0.53 & \textbf{35.87} & \textbf{26.27}
& -0.73 & \textbf{43.12} & \textbf{33.65}
& -0.65 & \textbf{43.14} & \textbf{33.33}
& -0.86 & \textbf{62.39} & \textbf{54.40}
& -0.88 & \textbf{81.29} & \textbf{74.23} \\
& & PCA
&   &   &  
&   &   &  
& -0.74 & 19.31 & 18.64
& -0.70 & 19.34 & 18.44
& -0.59 & 36.21 & 35.19
& -0.74 & 60.29 & 59.81 \\
& & SVD-Energy
&   &   &  
& -0.64 & 14.99 & 14.09
& -0.70 & 19.61 & 18.81
& -0.59 & 19.61 & 18.81
& -0.73 & 43.46 & 42.49
& -0.46 & 55.25 & 54.59 \\
& & SVD
&   &   &  
&   &   &  
& -0.83 & 14.36 & 15.34
& -0.58 & 14.36 & 15.34
& -0.69 & 28.21 & 29.11
& -0.77 & 51.58 & 51.52 \\
& & L-Rank
&   &   &  
&   &   &  
&   &   &  
& -0.64 & 15.00 & 29.08
& -0.33 & 15.00 & 29.08
& -0.44 & 28.71 & 54.89 \\
& & FT
&   &   &  
&   &   &  
&   &   &  
& -0.67 & 15.29 & 16.66
& -0.69 & 27.76 & 28.40
& -0.75 & 57.77 & 55.85 \\
& & PFP
&   &   &  
&   &   &  
&   &   &  
& -0.77 & 14.88 & 9.61
& -0.83 & 32.71 & 23.85
& -0.54 & 52.89 & 42.04 \\
\hline
\end{tabular}

\end{adjustbox}
\endgroup
\end{table}

\begin{table}[t!]
\begingroup
\setlength{\tabcolsep}{3.05pt} 
\renewcommand{\arraystretch}{1.2} 
\centering
\caption{
The maximal compression ratio for which the drop in test accuracy is at most some pre-specified $\delta$ on ResNet18 and MobileNetV2 (both ImageNet). The table reports compression ratio in terms of parameters and FLOPs, denoted by CR-P and CR-F, respectively.
When the desired $\delta$ was not achieved for any compression ratio in the range the fields are left blank. The top values achieved for CR-P and CR-F are bolded.
}
\label{tab-supp:imagenet_retrain}
\begin{adjustbox}{width=1\textwidth}
\begin{tabular}{|c|c|c||ccc|ccc|ccc|ccc|ccc|}
\hline
\multirow{16}{*}{\rotatebox{90}{ImageNet}}
& \multirow{2}{*}{Model}
& \multirow{2}{*}{\shortstack{Prune \\ Method}}
& \multicolumn{3}{c|}{$\delta=0.0\%$}
& \multicolumn{3}{c|}{$\delta=0.5\%$}
& \multicolumn{3}{c|}{$\delta=1.0\%$}
& \multicolumn{3}{c|}{$\delta=2.0\%$}
& \multicolumn{3}{c|}{$\delta=3.0\%$} \\
& &
& Top1/5 Acc. & CR-P & CR-F
& Top1/5 Acc. & CR-P & CR-F
& Top1/5 Acc. & CR-P & CR-F
& Top1/5 Acc. & CR-P & CR-F
& Top1/5 Acc. & CR-P & CR-F \\ \cline{2-18}
& \multirow{7}{*}{\shortstack{ResNet18 \\ \\ Top1: 69.62 \\ Top5: 89.08}}
& ALDS
& +0.08/+0.18 & \textbf{59.43} & \textbf{33.08}
& -0.15/-0.03 & \textbf{66.73} & \textbf{44.14}
& -0.97/-0.49 & \textbf{72.73} & \textbf{52.81}
& -1.63/-0.76 & \textbf{77.62} & \textbf{60.46}
& -2.53/-1.44 & \textbf{81.75} & \textbf{67.62} \\
& & PCA
&   &   &  
&   &   &  
& -0.88/-0.43 & 9.97 & 12.02
& -1.84/-0.94 & 50.43 & 51.07
& -2.34/-1.23 & 59.38 & 60.08 \\
& & SVD-Energy
&   &   &  
&   &   &  
&   &   &  
& -1.91/-0.93 & 50.47 & 51.46
& -2.82/-1.53 & 59.42 & 60.24 \\
& & SVD
&   &   &  
&   &   &  
&   &   &  
& -1.53/-0.83 & 50.44 & 50.38
& -2.06/-1.03 & 59.36 & 59.33 \\
& & L-Rank
&   &   &  
&   &   &  
& -0.72/-0.26 & 10.01 & 32.40
& -0.72/-0.26 & 10.01 & 32.40
& -2.30/-1.26 & 26.25 & 58.59 \\
& & FT
& +0.17/+0.21 & 9.96 & 10.78
& +0.17/+0.21 & 9.96 & 10.78
& -0.66/-0.32 & 26.12 & 26.62
& -1.72/-0.81 & 39.58 & 37.89
& -2.82/-1.55 & 50.62 & 45.74 \\
& & PFP
& +0.25/+0.33 & 10.04 & 7.72
& -0.38/-0.15 & 26.35 & 19.14
& -0.38/-0.15 & 26.35 & 19.14
& -0.38/-0.15 & 26.35 & 19.14
& -2.80/-1.84 & 50.41 & 37.59 \\
\cline{2-18}
& \multirow{7}{*}{\shortstack{MobileNetV2 \\ \\ Top1: 71.85 \\ Top5: 90.33}}
& ALDS
&   &   &  
&   &   &  
&   &   &  
& -1.53/-0.73 & \textbf{32.97} & 11.01
& -1.53/-0.73 & 32.97 & 11.01 \\
& & PCA
&   &   &  
&   &   &  
& -0.87/-0.55 & \textbf{20.91} & 0.26
& -0.87/-0.55 & 20.91 & 0.26
& -0.87/-0.55 & 20.91 & 0.26 \\
& & SVD-Energy
&   &   &  
&   &   &  
&   &   &  
& -1.27/-0.57 & 20.02 & 8.57
& -2.50/-1.45 & 32.72 & 20.83 \\
& & SVD
&   &   &  
&   &   &  
&   &   &  
&   &   &  
&   &   &   \\
& & L-Rank
&   &   &  
&   &   &  
&   &   &  
&   &   &  
&   &   &   \\
& & FT
&   &   &  
&   &   &  
&   &   &  
& -1.73/-0.85 & 21.31 & \textbf{20.23}
& -2.68/-1.46 & 32.75 & \textbf{28.23} \\
& & PFP
&   &   &  
&   &   &  
& -0.97/-0.40 & 20.02 & \textbf{7.96}
& -1.44/-0.51 & \textbf{32.74} & 13.49
& -2.17/-0.85 & \textbf{43.32} & 19.21 \\
\hline
\end{tabular}

\end{adjustbox}
\endgroup
\end{table}

\begin{table}[t!]
\begingroup
\setlength{\tabcolsep}{3.05pt} 
\renewcommand{\arraystretch}{1.3} 
\centering
\caption{
The maximal compression ratio for which the drop in test accuracy is at most $\delta=1.0\%$ for ResNet18 (ImageNet) for various amounts of retraining (as indicated).
The table reports compression ratio in terms of parameters and FLOPs, denoted by CR-P and CR-F, respectively.
When the desired $\delta$ was not achieved for any compression ratio in the range the fields are left blank. The top values achieved for CR-P and CR-F are bolded.
}
\label{tab-supp:imagenet_retrainsweep}
\begin{adjustbox}{width=1\textwidth}
\begin{tabular}{|c|c|c||ccc|ccc|ccc|ccc|ccc|ccc|}
\hline
\multirow{9}{*}{\rotatebox{90}{ImageNet}}
& \multirow{2}{*}{Model}
& \multirow{2}{*}{\shortstack{Prune \\ Method}}
& \multicolumn{3}{c|}{$r=0\%\,e$}
& \multicolumn{3}{c|}{$r=5\%\,e$}
& \multicolumn{3}{c|}{$r=10\%\,e$}
& \multicolumn{3}{c|}{$r=25\%\,e$}
& \multicolumn{3}{c|}{$r=50\%\,e$}
& \multicolumn{3}{c|}{$r=100\%\,e$} \\
& &
& Top1/5 Acc. & CR-P & CR-F
& Top1/5 Acc. & CR-P & CR-F
& Top1/5 Acc. & CR-P & CR-F
& Top1/5 Acc. & CR-P & CR-F
& Top1/5 Acc. & CR-P & CR-F
& Top1/5 Acc. & CR-P & CR-F \\ \cline{2-21}
& \multirow{7}{*}{\shortstack{ResNet18 \\ \\ Top1: 69.62 \\ Top5: 89.08}}
& ALDS
& -0.54/-0.24 & \textbf{39.57} & \textbf{15.20}
& -0.48/-0.24 & \textbf{50.46} & \textbf{23.70}
& -0.72/-0.30 & \textbf{53.50} & \textbf{26.90}
& -0.64/-0.31 & \textbf{61.90} & \textbf{36.58}
& -0.40/-0.23 & \textbf{68.78} & \textbf{47.23}
& -0.73/-0.31 & \textbf{70.73} & \textbf{49.85} \\
& & PCA
&   &   &  
&   &   &  
& -inf/-inf & 3.33 & 3.99
& -0.80/-0.38 & 26.21 & 27.53
& -0.76/-0.51 & 39.53 & 40.45
& -inf/-inf & 6.65 & 8.14 \\
& & SVD-Energy
&   &   &  
& -0.28/-0.14 & 10.00 & 11.05
& -0.25/-0.12 & 10.00 & 11.05
& -0.55/-0.25 & 26.24 & 27.14
& -0.66/-0.33 & 39.56 & 40.48
& -inf/-inf & 3.33 & 3.66 \\
& & SVD
&   &   &  
& -0.32/-0.13 & 9.98 & 9.94
& -0.19/-0.07 & 9.98 & 9.94
& -0.71/-0.34 & 30.63 & 30.82
& -0.59/-0.32 & 39.53 & 39.51
& -inf/-inf & 3.33 & 3.31 \\
& & L-Rank
&   &   &  
&   &   &  
& -inf/-inf & 3.34 & 10.88
& -0.40/-0.23 & 10.01 & 32.40
& -0.16/+0.03 & 10.01 & 32.40
& -0.72/-0.26 & 10.01 & 32.40 \\
& & FT
&   &   &  
&   &   &  
& -inf/-inf & 3.36 & 3.75
& -0.21/-0.15 & 9.95 & 10.78
& -0.83/-0.46 & 26.29 & 26.57
& -0.66/-0.32 & 26.12 & 26.62 \\
& & PFP
&   &   &  
&   &   &  
& -inf/-inf & 3.34 & 2.37
& -0.14/-0.13 & 9.96 & 7.72
& -0.37/-0.31 & 20.76 & 15.14
& -0.38/-0.15 & 26.35 & 19.14 \\
\hline
\end{tabular}

\end{adjustbox}
\endgroup
\end{table}

\begin{table}[t!]
\begingroup
\setlength{\tabcolsep}{3.05pt} 
\renewcommand{\arraystretch}{1.2} 
\centering
\caption{
The maximal compression ratio for which the drop in test accuracy is at most some pre-specified $\delta$ on DeeplabV3-ResNet50 (Pascal VOC2012). The table reports compression ratio in terms of parameters and FLOPs, denoted by CR-P and CR-F, respectively.
When the desired $\delta$ was not achieved for any compression ratio in the range the fields are left blank. The top values achieved for CR-P and CR-F are bolded.
}
\begin{adjustbox}{width=1\textwidth}
\begin{tabular}{|c|c|c||ccc|ccc|ccc|ccc|ccc|}
\hline
\multirow{9}{*}{\rotatebox{90}{VOCSegmentation2012}}
& \multirow{2}{*}{Model}
& \multirow{2}{*}{\shortstack{Prune \\ Method}}
& \multicolumn{3}{c|}{$\delta=0.0\%$}
& \multicolumn{3}{c|}{$\delta=0.5\%$}
& \multicolumn{3}{c|}{$\delta=1.0\%$}
& \multicolumn{3}{c|}{$\delta=2.0\%$}
& \multicolumn{3}{c|}{$\delta=3.0\%$} \\
& &
& IoU/Top1 Acc. & CR-P & CR-F
& IoU/Top1 Acc. & CR-P & CR-F
& IoU/Top1 Acc. & CR-P & CR-F
& IoU/Top1 Acc. & CR-P & CR-F
& IoU/Top1 Acc. & CR-P & CR-F \\ \cline{2-18}
& \multirow{7}{*}{\shortstack{DeeplabV3-ResNet50 \\ \\ IoU: 68.16 \\ Top1: 94.25}}
& ALDS
& +0.14/-0.15 & \textbf{64.38} & \textbf{64.11}
& +0.14/-0.15 & \textbf{64.38} & \textbf{64.11}
& +0.14/-0.15 & \textbf{64.38} & \textbf{64.11}
& -1.22/-0.36 & \textbf{71.36} & \textbf{70.89}
& -2.76/-0.61 & \textbf{76.96} & \textbf{76.37} \\
& & PCA
&   &   &  
& -0.26/-0.02 & 31.59 & 31.63
& -0.88/-0.24 & 55.68 & 55.82
& -1.74/-0.39 & 64.33 & 64.54
& -2.54/-0.46 & 71.29 & 71.63 \\
& & SVD-Energy
&   &   &  
&   &   &  
&   &   &  
& -1.88/-0.47 & 31.61 & 32.27
& -2.78/-0.62 & 44.99 & 45.60 \\
& & SVD
& +0.01/-0.02 & 14.99 & 14.85
& -0.28/-0.18 & 31.64 & 31.51
& -0.89/-0.25 & 45.02 & 44.95
& -1.97/-0.50 & 64.42 & 64.42
& -1.97/-0.50 & 64.42 & 64.42 \\
& & L-Rank
&   &   &  
& -0.42/-0.09 & 44.99 & 45.02
& -0.42/-0.09 & 44.99 & 45.02
& -1.29/-0.33 & 55.74 & 56.01
& -2.50/-0.57 & 64.39 & 64.82 \\
& & FT
&   &   &  
&   &   &  
&   &   &  
&   &   &  
&   &   &   \\
& & PFP
& +0.01/-0.05 & 31.79 & 30.62
& -0.49/-0.21 & 45.17 & 43.93
& -0.84/-0.32 & 55.78 & 54.61
& -0.84/-0.32 & 55.78 & 54.61
& -2.43/-0.61 & 64.47 & 63.41 \\
\hline
\end{tabular}

\end{adjustbox}
\label{tab-supp:voc_retrain}
\endgroup
\vspace{-1ex}
\end{table}

\FloatBarrier
\subsection{Complete ImageNet Benchmark Results from Section~\ref{sec:experiments_iterative}}
\label{sec-supp:results_imagenet_more}
Results are provided in Table~\ref{tab-supp:imagenetbenchmarks}.

\begin{table}
    \centering
    \caption{AlexNet and ResNet18 Benchmarks on ImageNet. We report Top-1, Top-5 accuracy and percentage reduction in terms of parameters and FLOPs denoted by CR-P and CR-F, respectively. Best results with less than 0.5\% accuracy drop are bolded.}
    \label{tab-supp:imagenetbenchmarks}
    \renewcommand{\arraystretch}{1.2} 
    \begin{tabular}{|c|l|cccc|}
        \hline
        & Method & $\Delta$-Top1 & $\Delta$-Top5 & CR-P (\%) & CR-F (\%)  \\
        \hline\hline
        \multirow{19}{*}{{\rotatebox{90}{ \textbf{ResNet18}, Top1, 5: 69.64\%, 88.98\%}}}
        & ALDS (Ours) 
        & +0.41 & +0.37 &  66.70 &   42.70 \\
        & ALDS (Ours)
        & \textbf{-0.38} & \textbf{+0.04} & \textbf{75.00} & \textbf{64.50} \\
        & ALDS (Ours)
        & -0.90 & -0.25 &  78.50 &  71.50 \\
        & ALDS (Ours)
        & -1.37 & -0.56  & 80.60 & 76.30 \\
        \cline{2-6}
        & MUSCO~{\tiny\citep{gusak2019automated}}
        & \textbf{-0.37}   &  \textbf{-0.20} & N/A & 58.67\\
        & TRP1~{\tiny\citep{Xu2020}}
        & -4.18     &-2.5    & N/A & 44.70\\
        & TRP1+Nu~{\tiny\citep{Xu2020}}
        & -4.25    & -2.61   & N/A & 55.15\\
        & TRP2+Nu~{\tiny\citep{Xu2020}}
        & -4.3    & -2.37   & N/A & 68.55 \\    
        & PCA~{\tiny\citep{zhang2015accelerating}}
        & -6.54    &-4.54   & N/A & 29.07 \\
        & Expand~{\tiny\citep{jaderberg2014speeding}}
        & -6.84    & -5.26  & N/A & 50.00 \\
        \cline{2-6}
        & PFP~{\tiny\citep{liebenwein2020provable}}
        &  -2.26  & -1.07 & 43.80 & 29.30 \\
        & SoftNet~{\tiny\citep{he2018soft}}
        & -2.54  & -1.2 & N/A & 41.80 \\
        & Median~{\tiny\citep{he2019filter}}
        &  -1.23  & -0.5  & N/A & 41.80 \\
        & Slimming~{\tiny\citep{liu2017learning}}
        & -1.77    &   -1.19  & N/A & 28.05 \\
        & Low-cost~{\tiny\citep{dong2017more}}
        & -3.55    &  -2.2  & N/A & 34.64 \\
        & Gating~{\tiny\citep{hua2018channel}}
        & -1.52    &  -0.93 & N/A & 37.88 \\
        & FT~{\tiny\citep{he2017channel}}
        & -3.08    &  -1.75 & N/A & 41.86 \\
        & DCP~{\tiny\citep{zhuang2018discrimination}}
        & -2.19    & -1.28 & N/A & 47.08 \\
        & FBS~{\tiny\citep{gao2018dynamic}}
        & -2.44    &  -1.36 & N/A & 49.49\\
        \hline\hline
        \multirow{11}{*}{\rotatebox{90}{\small \textbf{AlexNet}, Top1, 5: 57.30\%, 80.20\%}}
        & ALDS (Ours)
        & +0.10 & +0.45 & 92.00 & 76.10 \\
        & ALDS (Ours) 
        & -0.21 & -0.36 & 93.0 & 77.9 \\
        & ALDS (Ours)
        & \textbf{-0.41} & \textbf{-0.54} & \textbf{93.50} & \textbf{81.4} \\
        \cline{2-6}
        & Tucker~{\tiny\citep{Kim2015}}
        & N/A  & -1.87 & N/A & 62.40 \\
        & Regularize~{\tiny\citep{tai2015convolutional}}
        & N/A  & -0.54 &  N/A & 74.35\\
        & Coordinate~{\tiny\citep{Wen2017}}
        & N/A  & -0.34 & N/A & 62.82 \\
        & Efficient~{\tiny\citep{kim2019efficient}}
        & -0.7 & -0.3  & N/A & 62.40 \\
        & L-Rank~{\tiny({\color{blue}Idelbayev et al.,~\citeyear{idelbayev2020low}})}
        & \textbf{-0.13} & \textbf{-0.13} & N/A & \textbf{66.77} \\
        \cline{2-6}
        & NISP~{\tiny\citep{yu2018nisp}}
        & -1.43    & N/A  & N/A & 67.94 \\ 
        & OICSR~{\tiny\citep{li2019oicsr}}
        & -0.47    &N/A   & N/A & 53.70  \\ 
        & Oracle~{\tiny\citep{ding2019approximated}}
        & -1.13    &-0.67 & N/A & 31.97 \\
        \hline
    \end{tabular}
\end{table}

\subsection{Ablation Study}
\label{sec-supp:results_ablation}

In order to gain a better understanding of the various aspects of our method we consider an ablation study where we selectively turn off various features of ALDS. Specifically, we compare the full version of ALDS to the following variants: 

\begin{enumerate}[itemsep=5pt, leftmargin=25pt, labelwidth=10pt, topsep=5pt, partopsep=0pt]
    \item[\large 1.] 
    \textsc{ALDS-Error} solves for the optimal ranks (Line~\ref{lin:opt_ranks} of Algorithm~\ref{alg:budget_allocation}) for a desired set of values for $k^1,\ldots, k^L$. We test $k^\ell=3,\,\forall \ell \in [L]$. This variant tests the benefits of varying the number of subspaces compared to fixing them to a desired value.
    \item[\large 2.]
    \textsc{SVD-Error} corresponds to ALDS-Error with $k^\ell=1,\,\forall \ell \in [L]$. This variants tests the benefits of having multiple subspaces in the first places in the context error-based allocation of the per-layer compression ratio.
    \item[\large 3.] 
    \textsc{ALDS-Simple} picks the ranks in each layer for a desired set of values of $k^1,\ldots, k^L$ such that the per-layer compression ratio is constant. We test $k^\ell=3,\,\forall \ell \in [L]$, and $k^\ell=5,\,\forall \ell \in [L]$. This variant tests the benefits of allocating the per-layer compression ratio according to the layer error compared to a simple constant heuristic.
    \item[\large 4.] 
    \textsc{Messi} proceeds like ALDS-Simple but replaces the subspace clustering with projective clustering~\citep{maalouf2021deep}. We test $k^\ell=3,\,\forall \ell \in [L]$. This variant tests the disadvantages of having a simple subspace clustering technique (channel slicing) compared to using a more sophisticated technique.
\end{enumerate}

We note that ALDS-Simple with $k^\ell=1,\,\forall \ell \in [L]$ corresponds to the SVD comparison method from the previous sections.

We study the variations on a ResNet20 trained on CIFAR10 in two settings: compression only and one-shot compress+retrain. The results are presented in Figures~\ref{fig-supp:ablation}. We highlight that the complete variant of our algorithm (ALDS) consistently outperforms the weaker variants providing empirical evidence on the effectiveness of each of the core components of ALDS.

We note that varying the number of subspaces for each layer in order to optimally assign a value of $k^\ell$ in each layer is crucial in improving our performance. This is apparent from the comparison between ALDS, ALDS-Error, and SVD-Error: having a fixed value for k yields sub-optimal results. 

Picking an appropriate notion of cost (maximum relative error) is furthermore preferred over simple heuristics such a constant per-layer compression ratio. Specifically, the main difference between ALDS-Error and ALDS-Simple is the way how the ranks are determined for a given set of $k$'s: ALDS-Error optimizes for the error-based cost function while ALDS-Simple relies on a simple constant per-layer compression ratio heuristic. In practice, ALDS-Error outperforms ALDS-Simple across all tested scenarios.

Finally, we test the disadvantages of using a simple subspace clustering method. To this end, we compare ALDS-Simple and Messi for fixed values of $k$. While in some scenarios, particularly without retraining, Messi provides modest improvements over ALDS-Simple, the improvement is negligible for most settings. Moreover, note that Messi requires an expensive approximation algorithm as explained in Section~\ref{sec-supp:method_clustering}. This would in turn prevent us from incorporating Messi into the full ALDS framework in a computationally efficient manner. However, as apparent from the ablation study we exhibit the most performance gains for features related to global considerations instead of local, per-layer improvements. 
In addition, we should also note that Messi does not emit a structured reparameterization thus requires specialized software or hardware to obtain speed-ups. Consequently, we may conclude that channel slicing is the appropriate clustering technique in our context.

\begin{figure}
\centering
\begin{minipage}[t]{0.35\textwidth}
    \includegraphics[width=\textwidth]{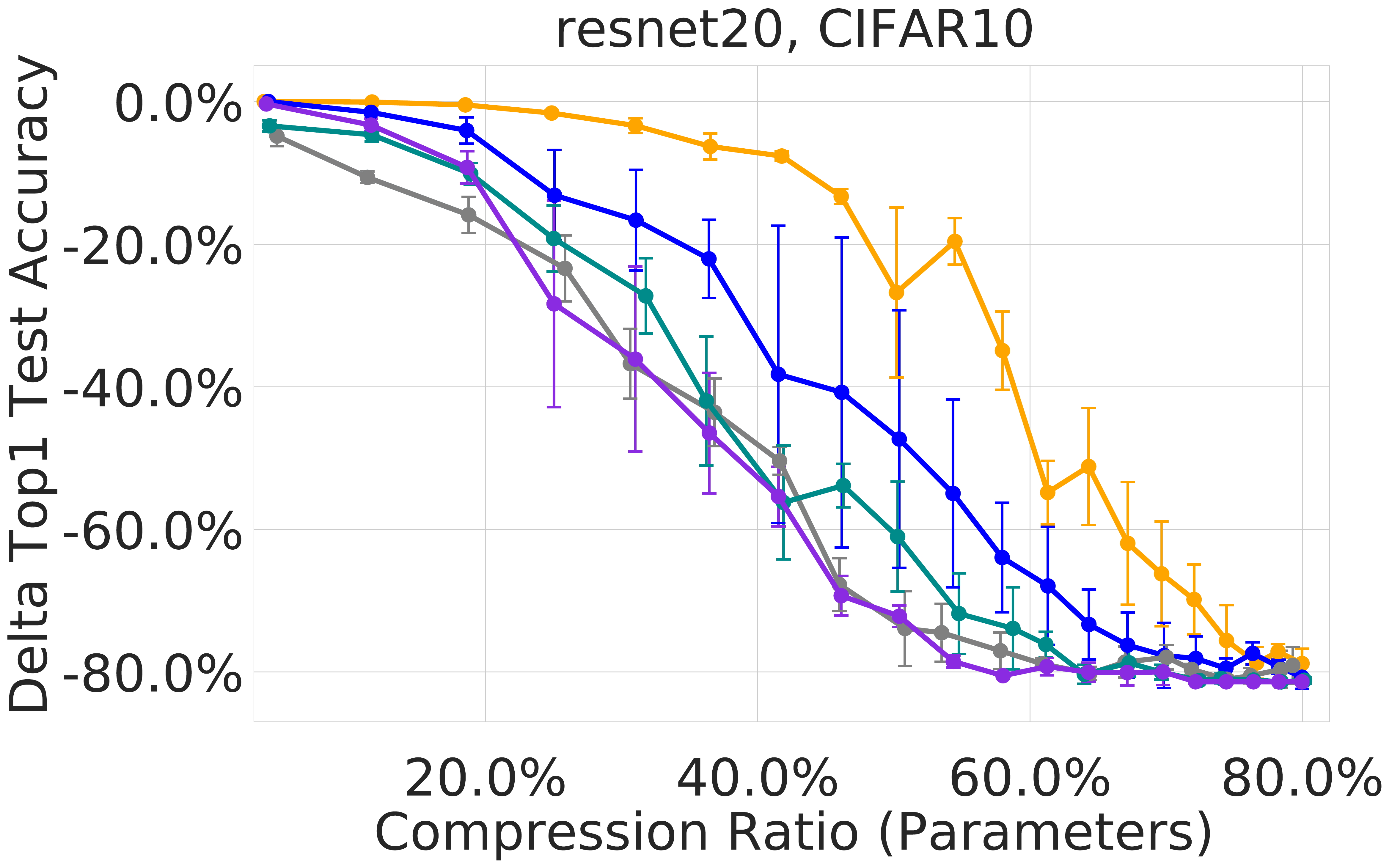}
    \subcaption{no retraining ($r=0$)}
\end{minipage}%
\begin{minipage}[t]{0.35\textwidth}
    \includegraphics[width=\textwidth]{fig/cifar_retrainablation/resnet20_CIFAR10_e182_re182_retrain_int20_CIFAR10_acc_param.pdf}
    \subcaption{one-shot ($r=e$)}
\end{minipage}%
\hspace{3ex}%
\begin{minipage}[t]{0.14\textwidth}%
    \vspace{-15ex}%
    \includegraphics[width=\textwidth]{fig/legend/ablation.pdf}
\end{minipage}%
\caption{
The difference in test accuracy (``Delta Top1 Test Accuracy'') for various target compression ratios, ALDS-based/ALDS-related methods, and networks on CIFAR10.}
\label{fig-supp:ablation}
\end{figure}

\newpage
\subsection{Extensions of ALDS}
\label{sec-supp:results_extension}
We test and compare ALDS with ALDS+ (see Section~\ref{sec-supp:method_extension}) to investigate the performance gains we can obtain from generalizing our local step to search over multiple decomposition schemes. We run one-shot compress-only experiments on ResNet20 (CIFAR10) and ResNet18 (ImageNet). 

The results are shown in Figure~\ref{fig-supp:extension}. We find that ALDS+ can significantly increase the performance-size trade-off compared to our standards ALDS method. This is expected since by generalizing the local step of ALDS we are increasing the search space of possible decomposition solution. Using our ALDS framework we can efficiently and automatically search over the increased solution space. We envision that our observations will invigorate future research into the possibility of not only choosing the optimal per-layer compression ratio but also the optimal compression scheme.

\begin{figure}[H]
\centering
\begin{minipage}[t]{0.35\textwidth}
    \includegraphics[width=\textwidth]{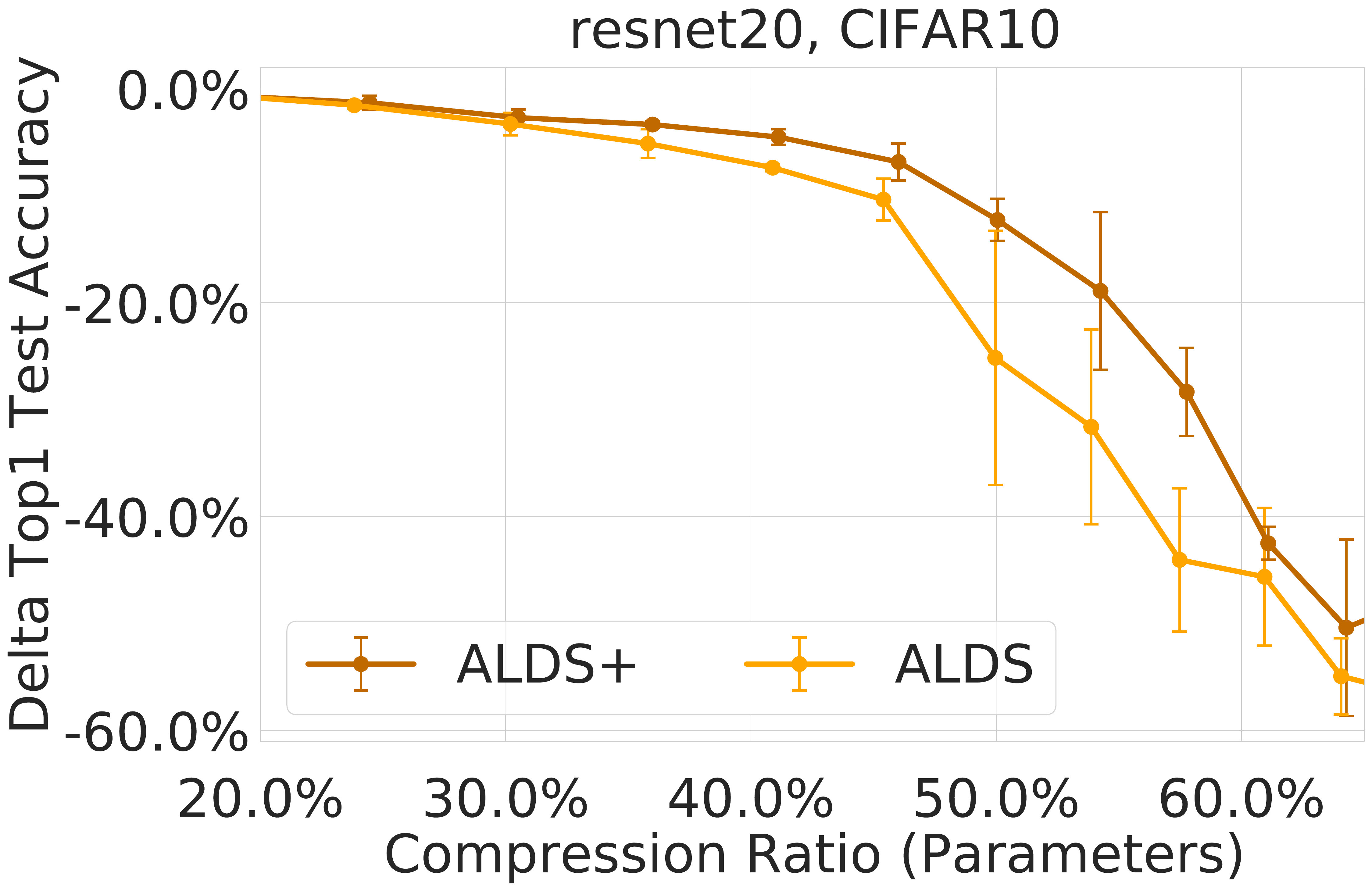}
    \subcaption{ResNet20 (CIFAR10)}
\end{minipage}%
\hspace{3ex}%
\begin{minipage}[t]{0.375\textwidth}
    \includegraphics[width=\textwidth]{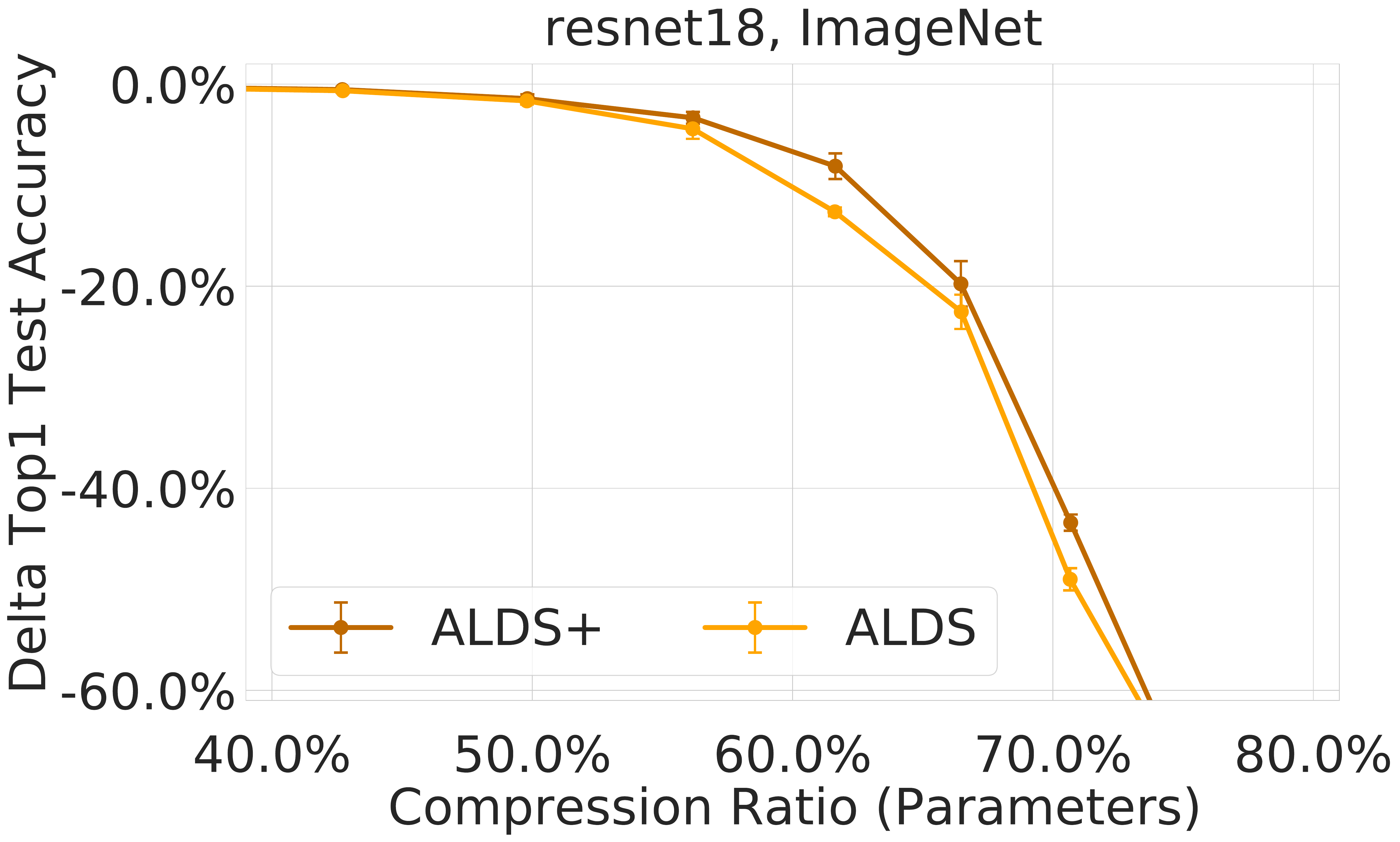}
    \subcaption{ResNet18 (ImageNet)}
\end{minipage}%
\caption{
The difference in test accuracy (``Delta Top1 Test Accuracy'') for various target compression ratios, ALDS-based/ALDS-related methods, and networks on CIFAR10. The networks were compressed once and not retrained afterwards.}
\label{fig-supp:extension}
\end{figure}


\end{document}